\newcommand{\R}{\mathbb{R}}
\newcommand{\sT}{\mathbb{T}}
\renewcommand{\vec}[1]{\ensuremath{\mathbf{#1}}}
\newcommand{\vecs}[1]{\ensuremath{\mathbf{\boldsymbol{#1}}}}
\newcommand{\mat}[1]{\ensuremath{\mathbf{#1}}}
\newcommand{\mats}[1]{\ensuremath{\mathbf{\boldsymbol{#1}}}}
\newcommand{\ten}[1]{\mat{\ensuremath{\boldsymbol{\mathcal{#1}}}}}
\newcommand{\idx}[1]{(#1)}
\renewcommand{\v}{\vec{v}}
\newcommand{\M}{\mat{M}}
\newcommand{\T}{\ten{T}}
\newcommand{\I}{\mat{I}}
\renewcommand{\H}{\mat{H}}
\newcommand{\G}{\mat{G}}
\renewcommand{\P}{\mat{P}}
\renewcommand{\S}{\mat{S}}
\newcommand{\Q}{\mat{Q}}
\newcommand{\U}{\mat{U}}
\newcommand{\D}{\mat{D}}
\newcommand{\V}{\mat{V}}
\newcommand{\Proj}{\mats{\Pi}}
\newcommand{\Ts}{\mathfrak{T}}
\newcommand{\Cs}{\mathfrak{C}}
\newcommand{\TF}{\Ts}
\newcommand{\CF}{\Cs}
\newcommand{\depth}[1]{\mathrm{depth}(#1)}
\newcommand{\size}[1]{\mathrm{size}(#1)}
\newcommand{\drop}[1]{\mathrm{drop}(#1)}
\newcommand{\yield}[1]{\langle #1 \rangle}
\newcommand{\sstar}{\Sigma^\star}
\newcommand{\gap}{\ast}
\newcommand{\ctm}{\mats{\Xi}}
\newcommand{\X}{\mathfrak{X}}
\newcommand{\weight}{\mathrm{weight}}
\newcommand{\Bs}{\mathfrak{B}}
\newcommand{\wzeroB}{\vecs{\beta}}
\newcommand{\winfB}{\vecs{\psi}}
\newcommand{\wzero}{\vecs{\alpha}}
\newcommand{\winf}{\vecs{\omega}}
\newcommand{\wtaSigma}{\langle \wzero, \T, \{\winf_\sigma\}_{\sigma \in \Sigma} \rangle}
\newcommand{\wta}{\langle \wzero, \T, \{\winf_\sigma\} \rangle} 
\newcommand{\wtaQ}{\langle \Q^\top \wzero , \T(\Q^{-\top},\Q,\Q), \{\Q^{-1} \winf_\sigma\} \rangle}
\newcommand{\gramc}{\G_\Cs}
\newcommand{\gramcapprox}{\hat{\G}_\Cs}
\newcommand{\gramt}{\G_\Ts}
\newcommand{\gramtapprox}{\hat{\G}_\Ts}
\newtheorem*{theorem*}{Theorem}
\newtheorem{theorem}{Theorem}
\newtheorem{proposition}{Proposition}
\newtheorem{lemma}{Lemma}
\DeclareMathOperator*{\vectorize}{vec}
\DeclareMathOperator*{\reshape}{reshape}
\newcommand{\eqdef}{\triangleq}
\newcommand{\rank}{\mathop{rank}}
\newcommand{\norm}[1]{\|#1\|}
\newcommand{\kron}{\otimes}
\newcommand{\Ker}{\mathrm{Ker}}
\renewcommand{\Im}{\mathrm{Im}}
\newcommand{\bigo}[1]{\mathcal{O}(#1)}
\newcommand{\singv}{\mathfrak{s}}
\renewcommand{\cite}{\citep}
\begin{document}

\title{Low-Rank Approximation of Weighted Tree Automata}

\author{Guillaume Rabusseau\footnote{Contact author: \texttt{guillaume.rabusseau@lif.univ-mrs.fr}.} \\ Aix-Marseille University \and Borja Balle \\ Lancaster University \and Shay B. Cohen \\ University of Edinburgh }

\maketitle

\begin{abstract}
We describe a technique to minimize weighted tree automata (WTA), a powerful formalism that subsumes
probabilistic context-free grammars (PCFGs) and latent-variable PCFGs. Our method relies on
a singular value decomposition of the underlying Hankel matrix defined by the WTA.
Our main theoretical result is an efficient algorithm for computing the SVD of
an infinite Hankel matrix implicitly represented as a WTA.
We provide an analysis of the approximation error induced by the minimization, and
we evaluate our method on real-world data originating in newswire treebank. We show that the model
achieves lower perplexity than previous methods for PCFG minimization, and also
is much more stable due to the absence of local optima.

\end{abstract}

\section{Introduction}\label{sec:introduction}

Probabilistic context-free grammars (PCFG) provide a powerful statistical
formalism for modeling important phenomena occurring in natural language.  In
fact, learning and parsing algorithms for PCFG are now standard tools in natural
language processing pipelines.
Most of these algorithms can be naturally extended to the superclass of weighted
context-free grammars (WCFG), and closely related models like weighted tree
automata (WTA) and latent probabilistic context-free grammars (LPCFG).
The complexity of these algorithms depends on the size of the grammar/automaton, typically controlled by the number of rules/states. Being able to control this complexity is essential in operations like parsing, which is typically executed every time the model is used to make a prediction.
In this paper we present an algorithm that given a WTA with $n$ states and a
target number of states $\hat{n} < n$, returns a WTA with $\hat{n}$ states that
is a good approximation of the original automaton. This can be interpreted as a
low-rank approximation method for WTA through the direct connection between number of
states of a WTA and the rank of its associated Hankel matrix. This opens the
door to reducing the complexity of algorithms working with WTA at the price of
incurring a small, controlled amount of error in the output of such algorithms.

Our techniques are inspired by recent developments in spectral learning
algorithms for different classes of models on sequences
\cite{hsu2012spectral,denis,psr,mlj13spectral} and trees
\cite{bailly2010spectral,cohen-14b}, and subsequent investigations into low-rank
spectral learning for predictive state representations
\cite{kulesza2014low,kulesza2015low} and approximate minimization of weighted
automata \cite{bpp15}.
In spectral learning algorithms, data is used to reconstruct a finite block of a
Hankel matrix and an SVD of such matrix then reveals a low-dimensional space
where a linear regression recovers the parameters of the model.
In contrast, our approach computes the SVD of the
\emph{infinite} Hankel matrix associated with a WTA. Our main result is an
efficient algorithm for computing this singular value decomposition by operating
directly on the WTA representation of the Hankel matrix; that is, without the need to explicitly represent this infinite matrix at any point.
Section~\ref{sec:approxmin} presents the main ideas underlying our approach.
Add a comment to this line
An efficient algorithmic implementation of these ideas is discussed in
Section~\ref{sec:algorithm}, and a theoretical analysis of the approximation error
induced by our minimization method is given in Section~\ref{sec:bounds}.
Proofs of all results stated in the paper can be found in appendix.

The idea of speeding up parsing with (L)PCFG by approximating the original model
with a smaller one was recently studied in \cite{cohen-12c,cohen-13a}, where a
tensor decomposition technique was used in order to obtain the minimized model.
We compare that approach to ours in the experiments presented in
Section~\ref{sec:experiments}, where both techniques are used to compute
approximations to a grammar learned from a corpus of real linguistic data.
It was observed in \cite{cohen-12c,cohen-13a} that a side-effect of reducing the
size of a grammar learned from data was a slight improvement in parsing performance. The
number of parameters in the approximate models is smaller, and as such,
generalization improves. We show in our experimental section that our
minimization algorithms have the same effect in certain parsing scenarios. In
addition, our approach yields models which give lower perplexity on an unseen
set of sentences, and provides a better approximation to the original model in
terms of $\ell_2$ distance.
It is important to remark that in contrast with the tensor decompositions
in \cite{cohen-12c,cohen-13a} which are susceptible to local optima problems,
our approach resembles a power-method approach to SVD, which yields efficient globally convergent algorithms.
Overall, we observe in our experiments that this renders a more stable
minimization method than the one using tensor decompositions.

\subsection{Notation}\label{sec:notation}

For an integer $n$, we write $[n] = \{1,\ldots,n\}$.
We use lower case bold letters (or symbols) for vectors (e.g.\ $\v \in \R^{d_1}$),
upper case bold letters for matrices (e.g.\ $\M \in \R^{d_1 \times d_2}$) and
bold calligraphic letters for third order tensors (e.g.\ $\T \in \R^{d_1
\times d_2 \times d_3}$).
Unless explicitly stated, vectors are by default column vectors. The identity
matrix will be written as $\I$.
Given $i_1 \in [d_1], i_2 \in [d_2], i_3 \in [d_3]$ we use $\v(i_1)$,
$\M(i_1,i_2)$, and $\T(i_1,i_2,i_3)$ to denote the corresponding
entries.
The $i$th row (resp. column) of a matrix $\M$ will be noted
$\M\idx{i,:}$ (resp. $\M\idx{:,i}$). This notation is extended to
slices across the three modes of a tensor in the straightforward way.
If $\v \in \R^{d_1}$ and $\v' \in \R^{d_2}$, we use $\v \kron \v' \in \R^{d_1
\cdot d_2}$ to denote the Kronecker product between vectors, and its
straightforward extension to matrices and tensors.
Given a matrix $\M \in \R^{d_1 \times d_2}$ we use $\vectorize(\M) \in \R^{d_1
\cdot d_2}$ to denote the column vector obtained by concatenating the columns of
$\M$.
Given a tensor $\T \in \R^{d_1 \times d_2 \times d_3}$ and matrices $\M_i \in
\R^{d_i \times d'_i}$ for $i \in [3]$, we define a tensor $\T(\M_1,\M_2,\M_3)
\in \R^{d'_1 \times d'_2 \times d'_3}$ whose entries are given by
\begin{equation*}
\T(\M_1, \M_2, \M_3)(i_1,i_2,i_3) = 
\sum_{j_1,j_2,j_3} \T(j_1,j_2,j_3)
\M_1(j_1,i_1) \M_2(j_2, i_2) \M_3(j_3, i_3) \enspace.
\end{equation*}
This operation corresponds to contracting $\T$ with $\M_i$ across the $i$th mode
of the tensor for each $i$.

\section{Approximate Minimization of WTA and SVD of Hankel Matrices}
\label{sec:approxmin}

In this section we present the first contribution of the paper. Namely, the
existence of a canonical form for weighted tree automata inducing the
singular value decomposition of the infinite Hankel matrix associated with the
automaton.
We start by recalling several definitions and well-known facts about WTA that
will be used in the rest of the paper. Then we proceed to establish the
existence of the canonical form, which we call the \emph{singular value tree
automaton}.
Finally we indicate how removing the states in this canonical form that
correspond to the smallest singular values of the Hankel matrix leads to an
effective procedure for model reduction in WTA.

\subsection{Weighted Tree Automata}

Let $\Sigma$ be a finite alphabet. We use $\sstar$ to denote the set of all
finite strings with symbols in $\Sigma$ with $\lambda$ denoting the empty
string. We write $|x|$ to denote the length of a string $x \in \sstar$. The
number of occurences of a symbol $\sigma \in \Sigma$ in a string $x \in \sstar$
is denoted by $|x|_\sigma$. 

The set of all \emph{rooted full binary trees} with leafs in $\Sigma$ is the
smallest set $\Ts_\Sigma$ such that $\Sigma \subset \Ts_\Sigma$ and $(t_1,t_2)
\in \Ts_\Sigma$ for any $t_1, t_2 \in \Ts_\Sigma$. We shall just write $\Ts$
when the alphabet $\Sigma$ is clear from the context.
The \emph{size} of a tree $t \in \Ts$ is denoted by $\size{t}$ and defined
recursively by $\size{\sigma} = 0$ for $\sigma \in \Sigma$, and
$\size{(t_1,t_2)} = \size{t_1} + \size{t_2} + 1$; that is, the number of
internal nodes in the tree.
The \emph{depth} of a tree $t \in \Ts$ is denoted by $\depth{t}$ and defined
recursively by $\depth{\sigma} = 0$ for $\sigma \in \Sigma$, and
$\depth{(t_1,t_2)} = \max\{\depth{t_1},\depth{t_2}\} + 1$; that is, the
distance from the root of the tree to the farthest leaf.
The \emph{yield} of a tree $t \in \Ts$ is a string $\yield{t} \in \Sigma^*$
defined as the left-to-right concatenation of the symbols in the leafs of $t$,
and can be recursively defined by $\yield{\sigma} = \sigma$, and
$\yield{(t_1,t_2)} = \yield{t_1} \cdot \yield{t_2}$.
The total number of nodes (internal plus leafs) of a tree $t$ is denoted by
$|t|$ and satisfies $|t| = \size{t} + |\yield{t}|$.

Let $\Sigma' = \Sigma \cup \{\gap\}$, where $\gap$ is a symbol \emph{not} in
$\Sigma$.
The set of \emph{rooted full binary context trees} is the set $\Cs_\Sigma = \{ c
\in \Ts_{\Sigma'} \;|\; |\yield{c}|_{\gap} = 1 \}$; that is, a context
$c \in \Cs_\Sigma$ is a tree in $\Ts_{\Sigma'}$ in which the symbol $\gap$
occurs exactly in one leaf.
Note that because given a context $c = (t_1,t_2) \in \Cs_\Sigma$ with $t_1,
t_2 \in \Ts_{\Sigma'}$ the symbol $\gap$ can only appear in one of the $t_1$ and
$t_2$, we must actually have $c = (c',t)$ or $c = (t,c')$ with $c' \in
\Cs_\Sigma$ and $t \in \Ts_\Sigma$.
The \emph{drop} of a context $c \in \Cs$ is the distance between the root and
the leaf labeled with $\gap$ in $c$, which can be defined recursively as
$\drop{\gap} = 0$, $\drop{(c,t)} = \drop{(t,c)} = \drop{c} + 1$.

We usually think as the leaf with the symbol $\gap$ in a context as a
placeholder where the root of another tree or another context can be inserted.
Accordingly, given $t \in \Ts$ and $c \in \Cs$, we can define $c[t] \in \Ts$ as the
tree obtained by replacing the occurence of $\gap$ in $c$ with $t$. Similarly,
given $c, c' \in \Cs$ we can obtain a new context tree $c[c']$ by replacing the
occurence of $\gap$ in $c$ with $c'$.
See Figure~\ref{fig:trees_contexts} for some illustrative examples.

\begin{figure}
\begin{center}
\includegraphics[scale=1]{./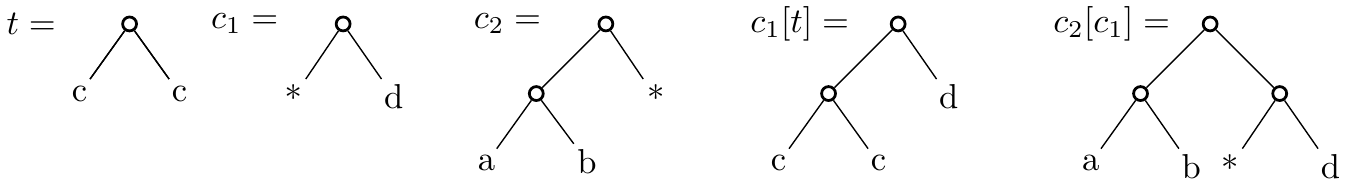}
\end{center}
\caption{Examples of trees ($t,c_1[t] \in \Ts_\Sigma$) and contexts 
($c_1,c_2,c_2[c_1] \in \Cs_\Sigma$) on the alphabet $\Sigma=\{a,b,c,d\}$. In our
notation:
$c_1[t] = ((c,c),d)$,
$\size{c_1[t]} = 2$,
$\depth{c_1[t]} = 2$,
$\yield{t} = cc$,
$\drop{c_2[c_1]} = 2$
}
\label{fig:trees_contexts}
\end{figure}

A \emph{weighted tree automaton} (WTA) over $\Sigma$ is a tuple $A = \wtaSigma$,
where $\wzero \in \R^n$ is the vector of \emph{initial weights}, $\T \in
\R^{n\times n \times n}$ is the tensor of \emph{transition weights}, and
$\winf_\sigma \in \R^n$ is the vector of \emph{terminal weights} associated with
$\sigma \in \Sigma$. The dimension $n$ is the number of states of the automaton,
which we shall sometimes denote by $|A|$.
A WTA $A = \wta$ \emph{computes} a function $f_A : \Ts_\Sigma \to \R$ assigning
to each tree $t \in \Ts$ the number computed as $f_A(t) = \wzero^\top
\winf_A(t)$, where $\winf_A(t) \in \R^n$ is obtained recursively as
$\winf_A(\sigma) = \winf_\sigma$, and $\winf_A((t_1,t_2)) =
\T(\I,\winf_A(t_1),\winf_A(t_2))$ --- note the matching of dimensions in this
last expression since contracting a third order tensor with a matrix in the
first mode and vectors in the second and third mode yields a vector. In many
cases we shall just write $\winf(t)$ when the automaton $A$ is clear from the
context. While WTA are usually defined over arbitrary ranked trees, only
considering binary trees does not lead to any loss of generality since WTA on ranked
trees are equivalent to WTA on binary trees (see \cite{bailly2010spectral} for
references). Additionally, one could consider binary trees where each internal
node is labelled, which leads to the definition of WTA with multiple transition
tensors. Our results can be extended to this case without much effort, but we
state them just for WTA with only one transition tensor to keep the notation
manageable.

An important observation is that there exist more than one WTA computing the
same function --- actually there exist infinitely many. An important
construction along these lines is the \emph{conjugate} of a WTA $A$ with $n$
states by an invertible matrix $\Q \in \R^{n \times n}$. If $A = \wta$, its
conjugate by $\Q$ is $A^\Q = \wtaQ$, where $\Q^{-\top}$ denotes the inverse of
$\Q^\top$.
To show that $f_A = f_{A^\Q}$ one applies an induction argument on $\depth{t}$
to show that for every $t \in \Ts$ one has $\winf_{A^\Q}(t) = \Q^{-1}
\winf_A(t)$. The claim is obvious for trees of zero depth $\sigma \in \Sigma$,
and for $t = (t_1,t_2)$ one has
\begin{align*}
 \winf_{A^\Q}((t_1,t_2)) &=
(\T(\Q^{-\top},\Q,\Q))(\I,\winf_{A^\Q}(t_1),\winf_{A^\Q}(t_2)) \\
& = (\T(\Q^{-\top},\Q,\Q))(\I,\Q^{-1} \winf_{A}(t_1),Q^{-1} \winf_{A}(t_2)) \\
& =
\T(\Q^{-\top},\winf_A(t_1),\winf_A(t_2)) \\
& = \Q^{-1} \T(\I,\winf_A(t_1),\winf_A(t_2))
\enspace,
\end{align*}
where we just used some simple rules of tensor algebra.

An arbitrary function $f : \Ts \to \R$ is called \emph{rational} if there exists
a WTA $A$ such that $f = f_A$. The number of states of the smallest such WTA is
the \emph{rank} of $f$ --- we shall set $\rank(f) = \infty$ if $f$ is not
rational. A WTA $A$ with $f_A = f$ and $|A| = \rank(f)$ is called
\emph{minimal}.
Given any $f : \Ts \to \R$ we define its \emph{Hankel matrix} as the infinite
matrix $\H_f \in \R^{\Cs \times \Ts}$ with rows indexed by contexts, columns
indexed by trees, and whose entries are given by $\H_f(c,t) = f(c[t])$.
Note that given a tree $t' \in \Ts$ there are exactly $|t'|$ different ways of
splitting $t' = c[t]$ with $c \in \Cs$ and $t \in \Ts$.
This implies that $\H_f$ is a highly redundant representation for $f$, and it
turns out that this redundancy is the key to proving the following fundamental
result about rational tree functions.

\begin{theorem}[\cite{Bozapalidis_Louscou-Bozapalidou_1983}]\label{thm:rank}
For any $f : \Ts \to \R$ we have $\rank(f) = \rank(\H_f)$.
\end{theorem}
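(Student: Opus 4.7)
The plan is to prove the two inequalities $\rank(\H_f) \le \rank(f)$ and $\rank(f) \le \rank(\H_f)$ separately; the first is straightforward, while the second requires building a WTA from a rank factorization of the Hankel matrix.

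For the easy direction, I would assume $f = f_A$ for some WTA $A = \wta$ with $n = \rank(f)$ states, and show by induction on $\depth{c}$ that for every context $c \in \Cs$ there exists a matrix $\M_c \in \R^{n \times n}$ with $\winf_A(c[t]) = \M_c \winf_A(t)$ for every $t \in \Ts$. One sets $\M_\gap = \I$, and for $c = (c',t_2)$ or $c = (t_1,c')$ one propagates $\M_{c'}$ through the bilinear contraction $\T(\I,\cdot,\cdot)$. Then $\H_f(c,t) = \wzero^\top \M_c \winf_A(t)$ factors $\H_f$ as a product of an $n$-column matrix (rows indexed by $c$) and an $n$-row matrix (columns indexed by $t$), giving $\rank(\H_f) \le n$.

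For the hard direction, assume $n = \rank(\H_f) < \infty$. I would pick any rank factorization $\H_f = P \Psi$ with $P \in \R^{\Cs \times n}$ and $\Psi \in \R^{n \times \Ts}$ of full rank $n$, and write $\psi(t) = \Psi(:,t)$. The target WTA $A$ is defined by setting $\winf_\sigma = \psi(\sigma)$ for each $\sigma \in \Sigma$, $\wzero = P(\gap,:)^\top$ where $\gap$ denotes the trivial context, and a transition tensor $\T$ chosen so that $\T(\I,\psi(t_1),\psi(t_2)) = \psi((t_1,t_2))$ for every $t_1,t_2 \in \Ts$. Once such $\T$ exists, a straightforward induction on $\depth{t}$ gives $\winf_A(t) = \psi(t)$, and hence $f_A(t) = \wzero^\top \psi(t) = P(\gap,:)\Psi(:,t) = \H_f(\gap,t) = f(t)$, so $\rank(f) \le n$.

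The main obstacle is showing that such a tensor $\T$ is well-defined, i.e.\ that the map $(t_1,t_2) \mapsto \psi((t_1,t_2))$ is bilinear in $\psi(t_1),\psi(t_2)$. Concretely, I need that whenever $\psi(t_1) = \sum_i \alpha_i \psi(s_i)$ one also has $\psi((t_1,t_2)) = \sum_i \alpha_i \psi((s_i,t_2))$, and symmetrically in the second argument. The key step uses the full column rank of $P$: the identity $\psi(t_1) = \sum_i \alpha_i \psi(s_i)$ is equivalent to $\H_f(c,t_1) = \sum_i \alpha_i \H_f(c,s_i)$ for every $c \in \Cs$. Specializing this to contexts of the form $c = c'[(\gap,t_2)]$ (which range over all of $\Cs$ as $c'$ does) yields $f(c'[(t_1,t_2)]) = \sum_i \alpha_i f(c'[(s_i,t_2)])$ for all $c'$, and applying the full column rank of $P$ once more gives $\psi((t_1,t_2)) = \sum_i \alpha_i \psi((s_i,t_2))$. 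Since $\{\psi(t)\}_{t \in \Ts}$ spans $\R^n$, the bilinear map extends uniquely to $\R^n \times \R^n \to \R^n$ and determines $\T \in \R^{n \times n \times n}$, completing the construction.
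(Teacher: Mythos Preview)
The paper does not actually prove this theorem; it is stated with a citation to \cite{Bozapalidis_Louscou-Bozapalidou_1983} and used as a black box. So there is no ``paper's own proof'' to compare against directly. That said, your argument is correct and follows the classical route for this result.

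A small wording issue: in the hard direction you write that contexts of the form $c = c'[(\gap,t_2)]$ ``range over all of $\Cs$ as $c'$ does.'' They do not range over all of $\Cs$; what matters (and what you actually use) is that for each $c' \in \Cs$ the composite $c'[(\gap,t_2)]$ is \emph{some} context, so the column identity $\H_f(\cdot,t_1) = \sum_i \alpha_i \H_f(\cdot,s_i)$ can be instantiated at it, yielding $\H_f(c',(t_1,t_2)) = \sum_i \alpha_i \H_f(c',(s_i,t_2))$ for every $c'$. The logic is fine; just tighten the phrasing.

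It is worth noting that your hard direction in fact proves more than Theorem~\ref{thm:rank}: you build, from an \emph{arbitrary} rank factorization $\H_f = P\Psi$, a minimal WTA whose induced factorization is exactly $P,\Psi$. This is precisely the content of Theorem~\ref{thm:rankfact}. The paper's proof of Theorem~\ref{thm:rankfact} takes a different path: it assumes Theorem~\ref{thm:rank}, picks an arbitrary minimal WTA $B$ (whose existence Theorem~\ref{thm:rank} guarantees), and then finds a change of basis $\Q$ such that the conjugate $B^{\Q}$ realizes the given factorization. Your approach is more self-contained (it does not need a pre-existing minimal WTA), while the paper's approach makes explicit that all minimal WTA for $f$ are conjugates of one another.
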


\subsection{Rank Factorizations of Hankel Matrices}

The theorem above can be rephrased as saying that the rank of $\H_f$ is finite
if and only if $f$ is rational.
When the rank of $\H_f$ is indeed finite --- say $\rank(\H_f) = n$ --- one can
find two rank $n$ matrices $\P \in \R^{\Cs
\times n}$, $\S \in \R^{n \times \Ts}$ such that $\H_f = \P \S$. In this case we
say that $\P$ and $\S$ give a \emph{rank factorization} of $\H_f$.
We shall now refine Theorem~\ref{thm:rank} by showing that when $f$ is rational,
the set of all possible rank factorizations of $\H_f$ is in direct correpondance
with the set of minimal WTA computing $f$.

The first step is to show that any minimal WTA $A = \wta$ computing $f$ induces
a rank factorization $\H_f = \P_A \S_A$. We build $\S_A \in \R^{n \times \Ts}$ by
setting the column corresponding to a tree $t$ to $\S_A(:,t) = \winf_A(t)$. In
order to define $\P_A$ we need to introduce a new mapping $\ctm_A : \Cs \to
\R^{n \times n}$ assigning a matrix to every context as follows: $\ctm_A(\gap) =
\I$, $\ctm_A((c,t)) = \T(\I,\ctm_A(c),\winf_A(t))$, and $\ctm_A((t,c)) =
\T(\I,\winf_A(t),\ctm_A(c))$. If we now define $\wzero_A : \Cs \to \R^n$ as
$\wzero_A(c)^\top = \wzero^\top \ctm_A(c)$, we can set the row of $\P_A$
corresponding to $c$ to be $\P_A(c,:) = \wzero_A(c)^\top$.
With these definitions one can easily show by induction on $\drop{c}$ that
$\ctm_A(c) \winf_A(t) = \winf_A(c[t])$ for any $c \in \Cs$ and $t \in \Ts$.
Then it is immediate to check that $\H_f = \P_A \S_A$:
\begin{align}
 \sum_{i=1}^n \P_A(c,i) \S_A(i,t) &= \wzero_A(c)^\top \winf_A(t) = \wzero^\top \ctm_A(c) \winf_A(t) \notag \\
 &= \wzero^\top \winf_A(c[t]) 
 = f_A(c[t]) \notag \\
 &= \H_f(c,t) \label{eqn:fct} \enspace.
\end{align}
As before, we shall sometimes just write $\ctm(c)$ and $\wzero(c)$ when $A$ is
clear from the context. We can now state the main result of this section, which
generalizes similar results in \cite{bpp15,mlj13spectral} for weighted automata
on strings.

\begin{theorem}\label{thm:rankfact}
Let $f : \Ts \to \R$ be rational. If $\H_f = \P \S$ is a rank factorization,
then there exists a minimal WTA $A$ computing $f$ such that $\P_A = \P$ and
$\S_A = \S$.
\end{theorem}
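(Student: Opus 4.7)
My plan is to construct the desired minimal WTA as a conjugate of an arbitrary minimal WTA computing $f$, where the conjugation matrix is dictated by the uniqueness (up to change of basis) of rank factorizations.

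\medskip

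\textbf{Step 1: Reduce to comparing two rank factorizations.} Since $f$ is rational, fix any minimal WTA $B = \wta$ computing $f$. The construction preceding the theorem yields an induced rank factorization $\H_f = \P_B \S_B$ with $\P_B \in \R^{\Cs \times n}$ and $\S_B \in \R^{n \times \Ts}$, where $n = \rank(f) = \rank(\H_f)$ by Theorem~\ref{thm:rank}. The hypothesis gives a second rank factorization $\H_f = \P \S$ of the same matrix, also of rank $n$. A standard linear algebra fact (which extends to the infinite-matrix setting because what matters is that $\P_B$ and $\P$ share the same $n$-dimensional column space, namely that of $\H_f$) yields an invertible $\Q \in \R^{n \times n}$ such that
\begin{equation*}
 \P = \P_B \Q \quad \text{and} \quad \S = \Q^{-1} \S_B \enspace.
\end{equation*}

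\medskip

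\textbf{Step 2: Candidate WTA via conjugation.} Let $A = B^{\Q}$ be the conjugate of $B$ by $\Q$, so that $A = \wtaQ$. We already know from the derivation preceding the theorem that $f_A = f_B = f$, and since conjugation preserves the number of states, $A$ is also minimal. It now suffices to verify that $\P_A = \P$ and $\S_A = \S$.

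\medskip

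\textbf{Step 3: Verify $\S_A = \S$ and $\P_A = \P$.} The identity $\winf_A(t) = \Q^{-1} \winf_B(t)$ for all $t \in \Ts$ was already established by induction on $\depth{t}$ in the paragraph defining conjugation, so column-by-column we get $\S_A = \Q^{-1} \S_B = \S$. For the $\P$ side I would prove, by induction on $\drop{c}$, the auxiliary identity
\begin{equation*}
 \ctm_A(c) = \Q^{-1} \ctm_B(c) \Q \quad \text{for all } c \in \Cs \enspace.
\end{equation*}
The base case $c = \gap$ is immediate. For the inductive step $c = (c',t)$, I would plug $\T(\Q^{-\top},\Q,\Q)$ into the defining recursion $\ctm_A((c',t)) = (\T(\Q^{-\top},\Q,\Q))(\I,\ctm_A(c'), \winf_A(t))$ together with the inductive hypothesis and $\winf_A(t) = \Q^{-1}\winf_B(t)$, and use the same tensor-algebra manipulation as in the derivation of $\winf_{A^\Q}$ to collapse the expression to $\Q^{-1} \T(\I,\ctm_B(c'),\winf_B(t))\Q = \Q^{-1}\ctm_B(c)\Q$; the case $c = (t,c')$ is symmetric. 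With this identity in hand,
\begin{equation*}
 \wzero_A(c)^\top = (\Q^\top \wzero)^\top \ctm_A(c) = \wzero^\top \Q\, \Q^{-1} \ctm_B(c)\, \Q = \wzero_B(c)^\top \Q \enspace,
\end{equation*}
so $\P_A(c,:) = \P_B(c,:) \Q$ row by row, giving $\P_A = \P_B \Q = \P$ as required.

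\medskip

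The only real obstacle is the tensor bookkeeping in the inductive step for $\ctm_A$, but it is entirely analogous to the calculation already carried out for $\winf_{A^\Q}$ in the conjugation paragraph and should go through cleanly.
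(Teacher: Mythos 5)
Your proposal is correct and follows essentially the same route as the paper's proof: fix an arbitrary minimal WTA $B$, obtain the invertible change of basis $\Q$ relating the two rank factorizations, take $A = B^{\Q}$, and verify $\S_A = \S$ via $\winf_A(t) = \Q^{-1}\winf_B(t)$ and $\P_A = \P$ via the induction on $\drop{c}$ showing $\ctm_A(c) = \Q^{-1}\ctm_B(c)\Q$. The only difference is that you defer the tensor-algebra computation in the inductive step by analogy, whereas the paper writes it out explicitly; this is a matter of detail, not of substance.
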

\begin{proof}
See Appendix~\ref{proof:thm:rankfact}.
\end{proof}

\subsection{Approximate Minimization with the Singular Value Tree Automaton}

Equation \eqref{eqn:fct} can be interpreted as saying that given a fixed factorization $\H_f = \P_A \S_A$, the value $f_A(c[t])$ is given by the inner product $
\sum_{i} \wzero_A(c)(i) \winf_A(t)(i)$. Thus, $\wzero_A(c)(i)$ and $\winf_A(t)(i)$ quantify the influence of state $i$ in the computation of $f_A(c[t])$, and by extension one can use $\norm{\P_A(:,i)}$ and $\norm{\S_A(i,:)}$ to measure the overall influence of state $i$ in $f_A$.
Since our goal is to approximate a given WTA by a smaller WTA obtained by
removing some states in the original one, we shall proceed by removing those
states with overall less influence on the computation of $f$.
But because there are infinitely many WTA computing $f$, we need to first fix a particular representation for $f$ before we can remove the less influential states. In particular, we seek a representation where each state is decoupled as much as possible from each other state, and where there is a clear ranking of states in terms of overall influence.
It turns out all this can be achieved by a canonical form for WTA we call the
singular value tree automaton, which provides an implicit representation for the SVD of $H_f$.
We now show conditions for the existence of such canonical form, and
in the next section we develop an algorithm for computing the it
efficiently.

Suppose $f : \Ts \to \R$ is a rank $n$ rational function such that its Hankel
matrix admits a reduced singular value decomposition $\H_f = \U \D \V^\top$.
Then we have that $\P = \U \D^{1/2}$ and $\S = \D^{1/2} \V^\top$ is a rank
decomposition for $\H_f$, and by Theorem~\ref{thm:rankfact} there exists some
minimal WTA $A$ with $f_A = f$, $\P_A = \U \D^{1/2}$ and $\S_A = \D^{1/2}
\V^\top$. We call such an $A$ a \emph{singular value tree automaton} (SVTA) for $f$.
However, these are not defined for every rational function $f$, because the
fact that columns of $\U$ and $\V$ must be unitary vectors (i.e.\ $\U^\top \U =
\V^\top \V = \I$) imposes some restrictions on which infinite Hankel matrices
$\H_f$ admit an SVD --- this phenomenon is related to the distinction between compact and non-compact operators in functional analysis.
Our next theorem gives a sufficient condition for the existence of an SVD of
$\H_f$.

We say that a function $f : \Ts \to \R$ is \emph{strongly convergent} if the
series $\sum_{t \in \Ts} |t| |f(t)|$ converges.
To see the intuitive meaning of this condition, assume that $f$ is a probability
distribution over trees in $\Ts$. In this case, strong convergence is equivalent
to saying that the expected size of trees generated from the distribution $f$ is
finite.
It turns out strong convergence of $f$ is a sufficient condition to guarantee the
existence of an SVD for $\H_f$. 

\begin{theorem}\label{thm:Hsvd}
If $f : \Ts_\Sigma \to \R$ is rational and strongly convergent, then $\H_f$ admits a
singular value decomposition.
\end{theorem}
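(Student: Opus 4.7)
The plan is to interpret the infinite Hankel matrix $\H_f$ as a linear operator between the Hilbert spaces $\ell^2(\Ts)$ and $\ell^2(\Cs)$ and to show that, under the strong convergence hypothesis, this operator is Hilbert--Schmidt. Since any Hilbert--Schmidt operator is compact and admits a singular value decomposition, and since Theorem~\ref{thm:rank} implies that $\H_f$ has finite rank $n$ whenever $f$ is rational, this will give the desired SVD as a finite sum of $n$ rank-one terms.

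The heart of the argument is a direct estimate of the Hilbert--Schmidt norm
\[
\|\H_f\|_{\mathrm{HS}}^2 = \sum_{c \in \Cs,\, t \in \Ts} \H_f(c,t)^2 = \sum_{c \in \Cs,\, t \in \Ts} f(c[t])^2.
\]
I would reindex this double sum by the combined tree $t' = c[t]$, using the fact already noted in the excerpt that every $t' \in \Ts$ admits exactly $|t'|$ distinct splittings $t' = c[t]$ with $c \in \Cs$ and $t \in \Ts$. This rewrites the Hilbert--Schmidt norm as
\[
\|\H_f\|_{\mathrm{HS}}^2 = \sum_{t' \in \Ts} |t'|\, f(t')^2,
\]
and the remaining task is to control this quantity using the strongly convergent series $\sum_{t'} |t'| |f(t')|$.

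For this I would invoke two elementary observations. First, the terms of any convergent series are bounded, so $M := \sup_{t' \in \Ts} |t'||f(t')|$ is finite. Second, since $|t'| \geq 1$ for every $t' \in \Ts$, one has $\sum_{t'} |f(t')| \leq \sum_{t'} |t'||f(t')| < \infty$. Combining these,
\[
\sum_{t' \in \Ts} |t'|\, f(t')^2 = \sum_{t'} \bigl(|t'||f(t')|\bigr)\cdot |f(t')| \leq M \sum_{t'} |f(t')| < \infty,
\]
which shows that $\H_f$ is Hilbert--Schmidt as an operator $\ell^2(\Ts) \to \ell^2(\Cs)$, and in particular compact.

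The conclusion then follows from standard spectral theory for compact operators between Hilbert spaces: $\H_f$ admits an SVD of the form $\sum_i \sigma_i \U(:,i) \V(:,i)^\top$ with orthonormal singular vectors in $\ell^2(\Cs)$ and $\ell^2(\Ts)$, and since $\rank(\H_f) = n < \infty$ by Theorem~\ref{thm:rank}, only finitely many $\sigma_i$ are nonzero, yielding the reduced SVD $\H_f = \U \D \V^\top$ used in the definition of the SVTA. I expect the main delicate point to be not the chain of inequalities itself, but the observation that strong convergence (rather than merely absolute summability of $f$) is what supplies the uniform bound $M$ on $|t'||f(t')|$ and thus collapses $|t'| f(t')^2$ to something dominated by $|f(t')|$; this is precisely why the theorem weighs each tree by its size in the hypothesis.
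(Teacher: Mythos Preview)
Your proposal is correct and follows essentially the same approach as the paper: both arguments show compactness by bounding $\sum_{c,t} f(c[t])^2 = \sum_{t'} |t'|\,f(t')^2$ via strong convergence, and then invoke the finite-rank structure from Theorem~\ref{thm:rank}. The only cosmetic differences are that the paper embeds both $\Ts$ and $\Cs$ into a single Hilbert space and checks boundedness (rather than the Hilbert--Schmidt condition directly), and in the final estimate it factors out $\sup_{t'} |f(t')|$ instead of your $\sup_{t'} |t'||f(t')|$; both variants are valid.
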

\begin{proof}
See Appendix~\ref{proof:thm:Hsvd}.
\end{proof}

Together, Theorems~\ref{thm:rankfact} and \ref{thm:Hsvd} imply that every
rational strongly convergent $f : \Ts \to \R$ can be represented by an SVTA $A$.
If $\rank(f) = n$, then $A$ has $n$ states and for every $i \in [n]$ the $i$th
state contributes to $\H_f$ by generating the $i$th left and right singular
vectors weighted by $\sqrt{\singv_i}$, where $\singv_i = \D(i,i)$ is the $i$th singular
value.
Thus, if we desire to obtain a good approximation $\hat{f}$ to $f$ with
$\hat{n}$ states, we can take the WTA $\hat{A}$ obtained by removing the last
$n - \hat{n}$ states from $A$, which corresponds to removing from $f$ the
contribution of the smallest singular values of $\H_f$. We call such $\hat{A}$
an \emph{SVTA truncation}.
Given an SVTA $A = \wta$ and $\Proj = [\I \;|\; \mat{0}] \in \R^{\hat{n} \times
n}$, the SVTA truncation to $\hat{n}$ states can be written as
\begin{equation*}
\hat{A} =
\langle \Proj \wzero, \T(\Proj^\top,\Proj^\top,\Proj^\top), \{\Proj \winf_\sigma\} \rangle
\enspace.
\end{equation*}

Theoretical guarantees on the error induced by the SVTA truncation method are
presented in Section~\ref{sec:bounds} .
\section{Computing the Singular Value WTA}\label{sec:algorithm}

Previous section shows that in order to compute an approximation to a strongly
convergent rational function $f : \Ts \to \R$ one can proceed by truncating its
SVTA. However, the only obvious way to obtain such SVTA is by computing the SVD
of the infinite matrix $\H_f$.
In this section we show that if we are given an arbitrary minimal WTA $A$
for $f$, then we can transform $A$ into the corresponding SVTA
efficiently.%
\footnote{If the WTA given to the algorithm is not minimal, a pre-processing
step can be used to minimize the input using the algorithm from
\cite{Kiefer_Marusic_Worrell_2015}.}
In other words, given a representation of $\H_f$ as a WTA, we can compute its
SVD \emph{without} the need to operate on infinite matrices.
The key observation is to reduce the computation of the SVD of $\H_f$ to the
computation of spectral properties of the Gram matrices $\gramc=\P^\top \P$ and
$\gramt = \S \S^\top$ associated with the rank factorization $\H_f = \P
\S$ induced by some minimal WTA computing $f$.
In the case of weighted automata on strings, \cite{bpp15} recently showed a polynomial time algorithm for computing the Gram matrices of a \emph{string} Hankel matrix by solving a system of linear equations. Unfortunately, extending their approach to the tree case requires obtaining a closed-form solution to a system of quadratic equations, which in general does not exist.
Thus, we shall resort to a different algorithmic technique and show that $\gramc$ and $\gramt$ can be obtained as fixed points of a certain non-linear operator. 
This yields the iterative algorithm presented in Algorithm~\ref{algo_gram} which converges exponentially fast as shown in Theorem~\ref{gram_cvg}.
The overall procedure to transform a WTA into the corresponding
SVTA is presented in Algorithm~\ref{algo:sva}.

We start with a simple linear algebra result showing exactly how to relate the eigendecompositions of $\gramc$ and $\gramt$ with the SVD of $\H_f$.

\begin{lemma}\label{lem:svdHf}
Let $f : \Ts \to \R$ be a rational function such that its Hankel matrix $\H_f$
admits an SVD. Suppose $\H_f = \P \S$ is a rank factorization. Then the
following hold:
\begin{enumerate}
\item $\gramc=\P^\top \P$ and $\gramt = \S \S^\top$ are finite symmetric
positive definite matrices with eigendecompositions $\gramc = \V_{\Cs} \D_{\Cs}
\V_{\Cs}^\top$ and $\gramt = \V_{\Ts} \D_{\Ts} \V_{\Ts}^\top$.
\item If $\M = \D_{\Cs}^{1/2} \V_{\Cs}^\top \V_{\Ts} \D_{\Ts}^{1/2}$ has SVD
$\M = \tilde{\U} \D \tilde{\V}^\top$, then $\H_f = \U \D \V^\top$ is an
SVD, where $\U = \P \V_\Cs \D_\Cs^{-1/2} \tilde{\U}$, and $\V^\top =
\tilde{\V}^\top \D_\Ts^{-1/2} \V_\Ts^\top \S$.
\end{enumerate}
\end{lemma}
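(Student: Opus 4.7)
My plan is to treat the two claims in turn: part 1 establishes that the Gram matrices are honest finite, positive definite operators admitting orthonormal eigendecompositions, and part 2 is then a direct algebraic verification that uses these decompositions to assemble an SVD of $\H_f$ from the SVD of the finite-dimensional matrix $\M$.

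For part 1, the delicate point is the convergence of the infinite sums $\gramc(i,j) = \sum_{c \in \Cs} \P(c,i) \P(c,j)$ and $\gramt(i,j) = \sum_{t \in \Ts} \S(i,t) \S(j,t)$, since $\P$ and $\S$ are indexed by the infinite sets $\Cs$ and $\Ts$. I will exploit the standard fact that any two rank factorizations of $\H_f$ are related by an invertible change of basis on the $n$-dimensional common factor. Since $\H_f = \U \D \V^\top$ is assumed to admit an SVD, the choice $\P_0 = \U \D^{1/2}$, $\S_0 = \D^{1/2} \V^\top$ is a rank factorization for which $\P_0^\top \P_0 = \D^{1/2} (\U^\top \U) \D^{1/2} = \D$ and $\S_0 \S_0^\top = \D$ are manifestly finite and positive definite ($\D$ contains the $n$ strictly positive singular values of $\H_f$). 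Any other rank factorization takes the form $\P = \P_0 \Q$ and $\S = \Q^{-1} \S_0$ for some invertible $\Q \in \R^{n \times n}$, yielding $\gramc = \Q^\top \D \Q$ and $\gramt = \Q^{-1} \D \Q^{-\top}$, both finite, symmetric, and positive definite. The orthonormal eigendecompositions $\gramc = \V_\Cs \D_\Cs \V_\Cs^\top$ and $\gramt = \V_\Ts \D_\Ts \V_\Ts^\top$ then exist by the spectral theorem.

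For part 2, I will verify directly that $\U^\top \U = \V^\top \V = \I$ and that $\U \D \V^\top = \H_f$. Substituting the definition of $\U$ and using $\gramc = \P^\top \P = \V_\Cs \D_\Cs \V_\Cs^\top$ together with the orthogonality of $\V_\Cs$ and $\tilde{\U}$, the product $\U^\top \U$ telescopes through $\D_\Cs^{-1/2} \D_\Cs \D_\Cs^{-1/2} = \I$ down to $\tilde{\U}^\top \tilde{\U} = \I$; the computation for $\V^\top \V$ is symmetric. For the main identity, substituting the definitions yields
\begin{equation*}
\U \D \V^\top = \P \V_\Cs \D_\Cs^{-1/2} (\tilde{\U} \D \tilde{\V}^\top) \D_\Ts^{-1/2} \V_\Ts^\top \S,
\end{equation*}
and the parenthesized factor is precisely $\M = \D_\Cs^{1/2} \V_\Cs^\top \V_\Ts \D_\Ts^{1/2}$ by construction, so the fractional powers of $\D_\Cs$ and $\D_\Ts$ cancel and what remains is $\P (\V_\Cs \V_\Cs^\top)(\V_\Ts \V_\Ts^\top) \S = \P \S = \H_f$, using that $\V_\Cs$ and $\V_\Ts$ are full $n \times n$ orthogonal matrices. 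The only real obstacle is the convergence argument in part 1; once the Gram matrices are known to be well-defined finite objects, everything else reduces to routine linear algebra in finite dimensions.
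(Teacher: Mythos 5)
Your proof is correct and is essentially the argument the paper intends: part 2 is the same routine verification of orthonormality and of $\U \D \V^\top = \P\S = \H_f$ that underlies the proof of \cite[Lemma~7]{bpp15}, to which the paper defers without giving details. Your part 1, which writes the given rank factorization as an invertible change of basis of the factorization $(\U\D^{1/2},\D^{1/2}\V^\top)$ coming from the assumed SVD of $\H_f$, so that $\gramc = \Q^\top \D \Q$ and $\gramt = \Q^{-1}\D\Q^{-\top}$, correctly and cleanly supplies the finiteness and positive-definiteness claims that the paper's citation leaves implicit.
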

\begin{proof}
The proof follows along the same lines as that of \cite[Lemma~7]{bpp15}.
\end{proof}

Putting together Lemma~\ref{lem:svdHf} and the proof of
Theorem~\ref{thm:rankfact} we see that given a minimal WTA computing a strongly
convergent rational function, Algorithm~\ref{algo:sva} below will compute the
corresponding SVTA. Note the algorithm depends on a procedure for computing the
Gram matrices $\gramt$ and $\gramc$. In the remaining of this section we
present one of our main results: a linearly convergent iterative algorithm for computing these matrices.

\begin{algorithm}
\caption{\texttt{ComputeSVTA}}\label{algo:sva}
\begin{algorithmic}
\REQUIRE A strongly convergent minimal WTA $A$
\ENSURE The corresponding SVTA
\STATE $\gramc, \gramt \leftarrow$ \texttt{GramMatrices}($A$)
\STATE Let $\gramt = \mat{V}_{\Ts}\mat{D}_{\Ts}\mat{V}_{\Ts}^\top$ and 
$\gramc = \mat{V}_{\Cs}\mat{D}_{\Cs}\mat{V}_{\Cs}^\top$ be the 
eigendecompositions of $\gramt$ and $\gramc$
\STATE Let $\mat{M} = \mat{D}_{\Cs}^{1/2} \mat{V}_{\Cs}^\top \mat{V}_{\Ts} 
\mat{D}_{\Ts}^{1/2}$ and let $\mat{M} = \mat{UDV}^\top$ 
be the singular value decomposition of $\mat{M}$
\STATE Let $\Q = \V_{\Cs} \D^{-1/2}_{\Cs} \U \D^{1/2}$
\RETURN $A^\mat{Q}$ 
\end{algorithmic}
\end{algorithm}

Let $A = \wta$ be a strongly convergent WTA of dimension $n$ computing a
function $f$. We now show how the Gram matrix $\gramt$ can be approximated
using a simple iterative scheme. 
Let $A^\kron = \langle\wzero^\kron,\T^\kron, \{\winf_\sigma^{\kron}\}\rangle$
where $\wzero^\kron = \wzero\kron\wzero$, $\T^\kron= \T\kron\T \in \R^{n^2\times
n^2 \times n^2}$ and  $\winf_\sigma^\kron = \winf_\sigma\kron\winf_\sigma$ for
all $\sigma\in\Sigma$. 
It is shown in \cite{Berstel_Reutenauer_1982} that $A^\kron$ computes the
function $f_{A^\kron}(t) = f(t)^2$.
Note we have $\gramt=\S \S^\top = \sum_{t\in \Ts}
\winf(t)\winf(t)^\top$, hence $\vec{s} \eqdef \vectorize(\gramt) = \sum_{t\in
\Ts} \winf^\kron(t)$ since $\winf^\kron(t) = \vectorize(\winf(t)
\winf(t)^\top)$.
Thus, computing the Gram matrix $\gramt$ boils down to computing the
vector $\vec{s}$. The following theorem shows that this can be done by
repeated applications of a non-linear operator until convergence to a fixed point. 

\begin{theorem}\label{thm:fpoint}
Let $F:\R^{n^2}\to\R^{n^2}$ be the mapping defined by $F(\vec{v}) =
\T^\kron(\mat{I},\vec{v},\vec{v}) + \sum_{\sigma \in \Sigma}
\winf^\kron_\sigma$. Then the following hold:
\begin{itemize}
\item[(i)] $\vec{s}$ is a fixed-point of $F$; i.e.\ $F(\vec{s}) = \vec{s}$.
\item[(ii)] $\vec{0}$ is in the basin of attraction of $\vec{s}$; i.e.\
$\lim_{k \to \infty} F^k(\vec{0}) = \vec{s}$.
\item[(iii)] The iteration defined by $\vec{s}_0=\vec{0}$ and
$\vec{s}_{k+1}=F(\vec{s}_k)$ converges linearly to $\vec{s}$; i.e. there exists
$0<\rho<1$ such that $\norm{\vec{s}_k - \vec{s}}_2 \leq \bigo{\rho^k}$.
\end{itemize} 
\end{theorem}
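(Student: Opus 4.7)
My plan is to split the defining series for $\vec{s}$ using the recursive decomposition $\Ts = \Sigma \sqcup \{(t_1,t_2) : t_1, t_2 \in \Ts\}$ and then recognize the recursive piece as a tensor contraction. By the construction of $A^\kron$, for any binary tree one has $\winf^\kron((t_1,t_2)) = \T^\kron(\I, \winf^\kron(t_1), \winf^\kron(t_2))$; pulling the (absolutely convergent) sums inside the multilinear contraction gives
\begin{equation*}
\vec{s} = \sum_{\sigma \in \Sigma} \winf^\kron_\sigma + \sum_{t_1,t_2 \in \Ts} \T^\kron(\I, \winf^\kron(t_1), \winf^\kron(t_2)) = \sum_\sigma \winf^\kron_\sigma + \T^\kron(\I, \vec{s}, \vec{s}) = F(\vec{s}).
\end{equation*}

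\textbf{Part (ii) — identifying the iterates.} I would prove by induction on $k$ that $\vec{s}_k = \sum_{t \in \Ts,\, \depth{t} < k} \winf^\kron(t)$. The base case $k=0$ is the empty sum. In the inductive step, $\T^\kron(\I,\vec{s}_k,\vec{s}_k)$ distributes by multilinearity into $\sum_{\depth{t_1},\depth{t_2} < k} \winf^\kron((t_1,t_2))$, which together with $\sum_\sigma \winf^\kron_\sigma$ is exactly the partial sum up to depth $k$. Convergence $\vec{s}_k \to \vec{s}$ then reduces to $\sum_{\depth{t}\geq k} \winf^\kron(t)\to\vec{0}$, which holds because the full series converges absolutely (as $\gramt$ is finite by strong convergence and Theorem~\ref{thm:Hsvd}).

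\textbf{Part (iii) — linear rate, the main obstacle.} Setting $\vec{e}_k = \vec{s} - \vec{s}_k$ and expanding the quadratic map $F$ around $\vec{s}$ gives
\begin{equation*}
\vec{e}_{k+1} = J\vec{e}_k - \T^\kron(\I,\vec{e}_k,\vec{e}_k), \qquad J\vec{h} := \T^\kron(\I,\vec{h},\vec{s}) + \T^\kron(\I,\vec{s},\vec{h}).
\end{equation*}
Once $\vec{e}_k$ is small, the quadratic term is dominated by the linear one, so standard fixed-point arguments yield $\norm{\vec{e}_k}_2 = \bigo{\rho^k}$ for any $\rho\in(\rho(J),1)$, provided $\rho(J)<1$. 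Proving this last inequality is the hardest step. My plan is to interpret $J^m\vec{w}$ combinatorially as a sum indexed by depth-$m$ ``contexts'' with a hole filled by $\vec{w}$ and every other subtree contracted against the full tree-sum $\vec{s}$. The resulting norm is then controlled by the tail $\sum_{\depth{t}\geq m}\norm{\winf(t)}^2$ weighted by $\norm{\vec{s}}$, which tends to zero under strong convergence. Summability of $\norm{J^m}$ in operator norm forces $\rho(J)<1$ via Gelfand's formula, closing the argument. The careful bookkeeping that turns the combinatorial expansion of $J^m$ into an operator-norm-summable series is where the main technical effort will be concentrated.
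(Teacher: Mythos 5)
Your parts (i) and (ii) are essentially the paper's own argument (splitting $\Ts$ into leaves and trees $(t_1,t_2)$, and identifying $F^k(\vec{0})$ with the partial sum over trees of bounded depth), and your reduction of (iii) to showing that the Jacobian $J\vec{h} = \T^\kron(\I,\vec{h},\vec{s}) + \T^\kron(\I,\vec{s},\vec{h})$ has spectral radius less than one also matches the paper, which invokes Ostrowski's theorem at exactly that point. The genuine gap is in how you propose to prove $\rho(J)<1$. Your combinatorial identification $J^m = \sum_{c} \ctm_{A^\kron}(c)$, the sum running over contexts of drop $m$, is correct, but the claimed control of $\norm{J^m}$ ``by the tail $\sum_{\depth{t}\geq m}\norm{\winf(t)}^2$ weighted by $\norm{\vec{s}}$'' does not follow from it: a context of drop $m$ has $m$ arbitrary (typically shallow) subtrees hanging off its spine, so term-by-term norm estimates only give growth of order $(2\norm{\T^\kron}\,\norm{\vec{s}})^m$, and nothing in the expansion produces tails of $\sum_t\norm{\winf(t)}^2$ indexed by tree depth. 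Strong convergence is a hypothesis on the scalar function $f$, and it only becomes usable after you sandwich $\ctm_{A^\kron}(c)$ between suitable vectors so that the resulting numbers are values of $f$.

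That is precisely what the paper does and what is missing from your plan: by minimality of $A$ one picks trees $t_1,\dots,t_n$ and contexts $c_1,\dots,c_n$ such that $\{\winf(t_i)\}$ and $\{\wzero(c_i)\}$ are linearly independent, so $\{\winf(t_i)\kron\winf(t_j)\}$ and $\{\wzero(c_i)\kron\wzero(c_j)\}$ are bases of $\R^{n^2}$; then $(\wzero(c_i)\kron\wzero(c_j))^\top J^m (\winf(t_k)\kron\winf(t_l)) = \sum_{c} f(c_i[c[t_k]])\,f(c_j[c[t_l]])$, and after bounding the number of factorizations $t'=c_i[c[t_k]]$ of a fixed tree $t'$ by $|t'|$, this is dominated by the square of the tail of $\sum_t |t|\,|f(t)|$ over trees of depth at least $m$ plus a constant --- note it is the tail of the strong-convergence series, not of $\sum_t\norm{\winf(t)}^2$, that appears (finiteness of the latter with the weight $|t|$ is never established and would itself require a minimality argument). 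Feeding this into an eigenvector of $J$ expanded in the tree basis (or, equivalently, concluding $\norm{J^m}\to 0$ in those bases and applying Gelfand's formula, as you suggest --- summability is not needed, one power with norm below one suffices) gives $\rho(J)<1$. Without minimality and this $f$-value interpretation your key estimate does not go through; with them, your outline becomes the paper's auxiliary lemma stating $\lim_{k\to\infty} (\wzero(c_1)\kron\wzero(c_2))^\top J^k (\winf(t_1)\kron\winf(t_2)) = 0$.
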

\begin{proof}
See Appendix~\ref{proof:thm:fpoint}.
\end{proof}

Though we could derive a similar iterative algorithm for computing $\gramc$, it turns out that knowledge of $\vec{s} = \vectorize(\gramt)$ provides an alternative, more efficient procedure for obtaining $\gramc$.
Like before, we have $\gramc = \P^\top\P = \sum_{c\in\Cs} \wzero(c)
\wzero(c)^\top$ and $\wzero^\kron(c) = \wzero(c)\kron \wzero(c)$ for all
$c\in\Cs$, hence $\vec{q} \eqdef \vectorize(\gramc) = \sum_{c\in \Cs}
\wzero^\kron(c)$.
By defining the matrix $\mat{E} = \T^\kron(\mat{I},\vec{s},\mat{I}) +\T^\kron(\mat{I},\mat{I},\vec{s})$ which only depends on $\T$ and $\vec{s}$, we can use the expression ${\wzero^\kron}^\top(c) = {\wzero^\kron}^\top \ctm_{A^\kron}(c)$
to see that:
\begin{align*}
\vec{q}^\top 
= 
\sum_{c\in \Cs} (\wzero^\kron)^\top \ctm_{A^\kron}(c) 
= 
(\wzero^\kron)^\top \sum_{k \geq 0} \mat{E}^k 
= 
(\wzero^\kron)^\top (\I - \mat{E})^{-1}\enspace,
\end{align*}
where we used the facts $\mat{E}^k = \sum_{c\in \Cs^k} \ctm_{A^\kron}(c)$ and $\rho(\mat{E}) < 1$ shown in the proof of Theorem~\ref{thm:fpoint}.

Algorithm~\ref{algo_gram} summarizes the overall approximation procedure for the
Gram matrices, which can be done to an arbitrary precision.
There, $\reshape(\cdot, n \times n)$ is an operation that takes an
$n^2$-dimensional vector and returns the $n \times n$ matrix whose first column
contains the first $n$ entries in the vector and so on.
Theoretical guarantees on the convergence rate of this algorithm are given in
the following theorem.

\begin{theorem}\label{gram_cvg}
There exists $0<\rho<1$ such that after $k$ iterations in
Algorithm~\ref{algo_gram}, the approximations  $\gramcapprox$ and $\gramtapprox$ 
satisfy $\norm{\gramc- \gramcapprox}_F \leq \mathcal{O}(\rho^k)$ and  
$\norm{\gramt- \gramtapprox}_F \leq \mathcal{O}(\rho^k)$.
\end{theorem}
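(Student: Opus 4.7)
The plan is to bound the two approximation errors separately, leveraging the fact that $\reshape(\cdot, n \times n)$ is an isometry between $(\R^{n^2}, \norm{\cdot}_2)$ and $(\R^{n\times n}, \norm{\cdot}_F)$, so in each case it suffices to control the vectorized quantity.

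First I would dispatch the bound on $\gramt$ immediately: by construction, Algorithm~\ref{algo_gram} sets $\gramtapprox = \reshape(\vec{s}_k, n\times n)$ where $\vec{s}_k = F^k(\vec{0})$ is the $k$th iterate of the operator from Theorem~\ref{thm:fpoint}. That theorem gives $\norm{\vec{s}_k - \vec{s}}_2 \leq \bigo{\rho^k}$ for some $0 < \rho < 1$, so isometry of the reshape yields $\norm{\gramt - \gramtapprox}_F \leq \bigo{\rho^k}$ with the same $\rho$.

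For $\gramc$ the argument is a perturbation analysis of the closed form $\vec{q}^\top = (\wzero^\kron)^\top (\I - \mat{E})^{-1}$. The algorithm replaces $\vec{s}$ by $\vec{s}_k$ in the definition of $\mat{E}$, giving $\hat{\mat{E}}_k = \T^\kron(\I,\vec{s}_k,\I) + \T^\kron(\I,\I,\vec{s}_k)$ and $\hat{\vec{q}}_k^\top = (\wzero^\kron)^\top(\I - \hat{\mat{E}}_k)^{-1}$. Since the map $\vec{v} \mapsto \T^\kron(\I,\vec{v},\I) + \T^\kron(\I,\I,\vec{v})$ is linear, its operator norm is a constant $C_\T$ depending only on $\T$, and hence $\norm{\hat{\mat{E}}_k - \mat{E}}_2 \leq C_\T \norm{\vec{s}_k - \vec{s}}_2 \leq \bigo{\rho^k}$. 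Using the resolvent identity
\begin{equation*}
(\I - \hat{\mat{E}}_k)^{-1} - (\I - \mat{E})^{-1} = (\I - \hat{\mat{E}}_k)^{-1}(\hat{\mat{E}}_k - \mat{E})(\I - \mat{E})^{-1},
\end{equation*}
I would conclude by bounding $\norm{\gramc - \gramcapprox}_F \leq \norm{\wzero^\kron}_2 \cdot \norm{(\I-\hat{\mat{E}}_k)^{-1}}_2 \cdot \norm{\hat{\mat{E}}_k - \mat{E}}_2 \cdot \norm{(\I-\mat{E})^{-1}}_2$.

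The main obstacle is ensuring the factor $\norm{(\I - \hat{\mat{E}}_k)^{-1}}_2$ stays bounded as $k$ grows, since a priori $\hat{\mat{E}}_k$ might have spectral radius exceeding $1$ for small $k$. From the proof of Theorem~\ref{thm:fpoint} we can assume $\rho(\mat{E}) < 1$, so $(\I - \mat{E})^{-1}$ exists with some finite norm $B$. For $k$ large enough that $\norm{\hat{\mat{E}}_k - \mat{E}}_2 \leq 1/(2B)$, a standard Neumann series perturbation bound gives $\norm{(\I - \hat{\mat{E}}_k)^{-1}}_2 \leq 2B$, which absorbs the $k$-dependence into a constant. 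Combining everything, $\norm{\gramc - \gramcapprox}_F = \bigo{\rho^k}$ with the same $\rho$ as for $\gramt$, completing the proof.
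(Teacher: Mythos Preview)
Your proposal is correct and follows essentially the same approach as the paper. Both handle $\gramt$ directly via Theorem~\ref{thm:fpoint}, and both treat $\gramc$ by a perturbation analysis of $(\I-\mat{E})^{-1}$: the paper cites an off-the-shelf bound $\norm{(\I-\mat{E})^{-1}-(\I-\hat{\mat{E}})^{-1}}\leq \delta/(\sigma(\sigma-\delta))$ (with $\sigma$ the smallest eigenvalue of $\I-\mat{E}$) and then restricts to $\delta\leq\sigma/2$, whereas you derive the same control explicitly via the resolvent identity and a Neumann-series bound on $\norm{(\I-\hat{\mat{E}}_k)^{-1}}$---the two arguments are interchangeable.
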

\begin{proof}
See Appendix~\ref{proof:gram_cvg}.
\end{proof}

\begin{algorithm}
  \caption{\texttt{GramMatrices}}
  \label{algo_gram}
\begin{algorithmic}
\REQUIRE A strongly convergent minimal WTA $A = \wta$
\ENSURE Gram matrices 
$\gramcapprox \simeq \sum_{c\in \CF} \wzero_A(c)\wzero_A(c)^\top$ 
and $\gramtapprox \simeq \sum_{t\in \TF} \winf_A(t) \winf_A(t)^\top$
\STATE Let $\T^\kron= \T\kron\T \in \R^{n^2\times n^2 \times n^2}$, and let
$\winf_\sigma^\kron = \winf_\sigma\kron\winf_\sigma \in \R^{n^2}$ 
for all $\sigma\in\Sigma$. 
\STATE Let $\mat{I}$ be the $n^2\times n^2$ identity matrix and let $\vec{s} =
\vec{0} \in \R^{n^2}$ 
\REPEAT
\STATE $\vec{s} \leftarrow \T^\kron(\mat{I}, \vec{s}, \vec{s}) + 
\sum_{\sigma\in\Sigma} \winf_\sigma^\kron$
\UNTIL convergence
\STATE $\vec{q} = (\wzero \kron \wzero)^\top \left(\mat{I} -
\T^\kron(\mat{I},\mat{I},\vec{s}) - \T^\kron(\mat{I},\vec{s},\mat{I})
\right)^{-1} $ 
\STATE $\gramtapprox = \reshape(\vec{s}, n\times n)$
\STATE $\gramcapprox = \reshape(\vec{q}, n\times n)$
\RETURN $\gramcapprox, \gramtapprox$
\end{algorithmic}
\end{algorithm}

\section{Approximation Error of an SVTA Truncation}\label{sec:bounds}
In this section, we analyze the approximation error induced by the truncation 
of an SVTA. We recall that given a SVTA $A= \wta$, its truncation to $\hat{n}$ 
states is the automaton
\begin{equation*}
\hat{A} =
\langle \Proj \wzero, \T(\Proj^\top,\Proj^\top,\Proj^\top), \{\Proj \winf_\sigma\} \rangle
\end{equation*}
where $\Proj = [\I \;|\; \mat{0}] \in \R^{\hat{n} \times n}$ is the projection
matrix which removes the states associated with the $n-\hat{n}$ smallest
singular values of the Hankel matrix.

Intuitively, the states associated with the smaller singular values
are the ones with the less influence on the Hankel matrix, thus they should
also be the states having the less effect on the computation of the SVTA. 
The following theorem support this intuition by showing a fundamental relation between the singular
values of the Hankel matrix of a rational function $f$ and the parameters of
the SVTA computing it.

\begin{theorem}\label{prop:singv_and_SVTA_params}
Let $A = \wtaSigma$ be a SVTA with $n$ states realizing a function $f$ and let 
$\singv_1 \geq \singv_2 \geq \cdots \geq \singv_n$ be the singular
values of the Hankel matrix $\mat{H}_f$.
Then, for any $t\in \Ts$ and $i,j,k\in [n]$ the following hold:
 \begin{itemize}
 \item $|\winf(t)_i| \leq \sqrt{\singv_i}$ ,
 \item $|\wzero_i| \leq \sqrt{\singv_i}$ , and
 \item $|\T(i,j,k)| \leq \min\{\frac{\sqrt{\singv_i}}{\sqrt{\singv_j}\sqrt{\singv_k}},
\frac{\sqrt{\singv_j}}{\sqrt{\singv_i}\sqrt{\singv_k}},\frac{\sqrt{\singv_k}}{\sqrt{\singv_i}\sqrt{\singv_j}}\}$.
 \end{itemize}
\end{theorem}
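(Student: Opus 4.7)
The plan hinges on the two orthogonality identities that an SVTA inherits from the SVD $\H_f = \U\D\V^\top$. Because $\P_A = \U\D^{1/2}$ and $\S_A = \D^{1/2}\V^\top$ with $\U^\top\U = \V^\top\V = \I$, we have $\P_A^\top\P_A = \D = \S_A\S_A^\top$, which, written out in the notation of Section~\ref{sec:approxmin}, reads
$$\sum_{t\in\Ts}\winf(t)\winf(t)^\top \;=\; \D \;=\; \sum_{c\in\Cs}\wzero_A(c)\wzero_A(c)^\top,$$
where $\D = \mathrm{diag}(\singv_1,\ldots,\singv_n)$. Taking $(i,i)$-entries gives $\sum_{t'}\winf(t')_i^2 = \singv_i$ and $\sum_{c'}\wzero_A(c')_i^2 = \singv_i$, and since every summand is nonnegative each individual term is bounded by $\singv_i$. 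The first bullet is immediate, and the second follows by noting $\ctm_A(\gap) = \I$, so that $\wzero_A(\gap) = \wzero$.

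For the first tensor bound I would restrict the tree identity to trees of the form $(t_L,t_R)$ and extract the $(i,i)$-entry, obtaining $\sum_{t_L,t_R}\winf((t_L,t_R))_i^2 \leq \singv_i$. Expanding $\winf((t_L,t_R))_i = \sum_{j,k}\T(i,j,k)\winf(t_L)_j\winf(t_R)_k$, squaring, and applying tree orthogonality independently to the $t_L$- and $t_R$-sums collapses everything to $\sum_{j,k}\T(i,j,k)^2\singv_j\singv_k \leq \singv_i$. Nonnegativity of the summands then yields $|\T(i,j,k)| \leq \sqrt{\singv_i}/(\sqrt{\singv_j}\sqrt{\singv_k})$.

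The remaining two bounds are the more delicate part, and the key will be the disjoint decomposition
$$\Cs \;=\; \{\gap\} \;\sqcup\; \{c[(\gap, t_R)] : c\in\Cs,\, t_R\in\Ts\} \;\sqcup\; \{c[(t_L, \gap)] : c\in\Cs,\, t_L\in\Ts\},$$
which classifies a non-trivial context by the direction of its final step from root to hole. Restricting the context identity to the middle family, using $\ctm_A(c[(\gap, t_R)]) = \ctm_A(c)\,\ctm_A((\gap, t_R))$ to rewrite $\wzero_A(c[(\gap, t_R)]) = \ctm_A((\gap, t_R))^\top \wzero_A(c)$, and then invoking the context identity \emph{again} on the resulting inner sum $\sum_c \wzero_A(c)\wzero_A(c)^\top = \D$, I obtain $\sum_{t_R} \ctm_A((\gap, t_R))^\top \D\, \ctm_A((\gap, t_R)) \preceq \D$. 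Taking the $(j,j)$-entry, substituting $\ctm_A((\gap,t_R))(i,j) = \sum_k \T(i,j,k)\winf(t_R)_k$, and applying tree orthogonality to the $t_R$-sum reduces this to $\sum_{i,k}\T(i,j,k)^2 \singv_i \singv_k \leq \singv_j$, giving the second inequality. The third is symmetric, obtained by running the same argument on the third family $\{c[(t_L,\gap)]\}$ and the relation $\ctm_A((t_L,\gap))(i,k) = \sum_j \T(i,j,k)\winf(t_L)_j$.

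The main obstacle, I expect, is spotting the "double use" of the context identity in the last paragraph: $\sum_c\wzero_A(c)\wzero_A(c)^\top = \D$ must be applied both to the outer context $c$ and (via the restriction to a sub-family of $\Cs$) to the total context $c[(\gap, t_R)]$. It is this nesting, together with the recursive definition of $\ctm_A$, that injects the factor $\singv_i$ into the bound and creates the asymmetric roles of the three modes of $\T$. Once the decomposition of $\Cs$ is in place, everything else is routine tensor-algebra expansion using the two orthogonality identities.
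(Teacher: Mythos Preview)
Your proposal is correct and follows essentially the same approach as the paper. The paper packages the argument into a preliminary lemma giving the fixed-point identities
\[
\singv_i = \sum_{\sigma}\winf_\sigma(i)^2 + \sum_{j,k}\T(i,j,k)^2\singv_j\singv_k,
\qquad
\singv_i = \wzero(i)^2 + \sum_{j,k}\bigl(\T(j,i,k)^2+\T(j,k,i)^2\bigr)\singv_j\singv_k,
\]
which are exactly what your tree/context decompositions produce (you obtain $\leq$ by dropping the leading nonnegative term, the paper keeps the full equality). Your ``double use'' of the context identity is precisely the mechanism behind the paper's fixed-point equation for $\gramc$: the decomposition $\Cs = \{\gap\}\sqcup\{c[(\gap,t_R)]\}\sqcup\{c[(t_L,\gap)]\}$ together with $\sum_c\wzero_A(c)\wzero_A(c)^\top=\D$ applied to the inner $c$ is how the paper derives $\gramc = \wzero\wzero^\top + \T_{(2)}(\gramc\otimes\gramt)\T_{(2)}^\top + \T_{(3)}(\gramc\otimes\gramt)\T_{(3)}^\top$, although the paper does not spell this out. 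For the first bullet the paper argues directly from $\S_A=\D^{1/2}\V^\top$ and $|\V(t,i)|\leq 1$, while you use $\sum_t\winf(t)_i^2=\singv_i$; these are of course equivalent.
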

\begin{proof}
See Appendix~\ref{proof:prop:singv_and_SVTA_params}.
\end{proof}

Two important properties of SVTAs follow from this proposition. First, the 
fact that $|\winf(t)_i| \leq \sqrt{\singv_i}$ implies that the weights associated with 
states corresponding to small singular values are small. Second, this proposition 
gives us some intuition on how the states of an SVTA interact with each other.
To see this, let $\M = \T(\wzero, \I, \I)$ and remark that for a tree $t=(t_1,t_2) \in \Ts$ we
have $f(t) = \winf(t_1)^\top \M \winf(t_2)$. Using the previous theorem one can show
that 
$$|\M(i,j)| \leq n\ \sqrt{\frac{\min\{\singv_i,\singv_j\}}{\max\{\singv_i,\singv_j\}}} \enspace,$$
which tells us that two states corresponding to singular values
far away from each other have very little interaction in the computations of the automata.

Theorem~\ref{prop:singv_and_SVTA_params} is key to proving the following theorem, 
which is the main result of this section. It shows how the approximation
error induced by the truncation of an SVTA is impacted by the magnitudes of the singular
values associated with the removed states.
 
\begin{theorem}\label{thm:svtabound}
Let $A = \wtaSigma$ be a SVTA with $n$ states realizing a function $f$ and let 
$\singv_1 \geq \singv_2 \geq \cdots \geq \singv_n$ be the singular
values of the Hankel matrix $\mat{H}_f$. Let $\hat{f}$ be the function computed
by the SVTA truncation of $A$ to $\hat{n}$ states. The following holds for any $\varepsilon > 0$:
\begin{itemize}
\item  For any tree $t\in \Ts$ of size $M$,
 if ${\displaystyle M < \frac{\log\left(\frac{1}{\singv_{\hat{n}+1}}\right) + \log\left(\varepsilon\right) }{2\log n} }$
then $|f(t) - \hat{f}(t)| < \varepsilon$.
\item Furthermore, if ${\displaystyle M < \frac{\log\left(\frac{1}{\singv_{\hat{n}+1}}\right) + \log(\varepsilon)}{\log(4 |\Sigma| n^2)} - 1}$ then
$\displaystyle\sum_{t:\size{t}<M} |f(t)-\hat{f}(t)|< \varepsilon$.
\end{itemize}
\end{theorem}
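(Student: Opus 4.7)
The plan is to write $f(t) - \hat{f}(t) = \wzero^\top \vec{e}(t)$, where $\vec{e}(t) = \winf(t) - \Proj^\top \winf_{\hat{A}}(t) \in \R^n$ is the componentwise error between the SVTA state vector and the zero-padded truncated one. On leaves, $\vec{e}(\sigma) = (\I - \Proj^\top\Proj)\winf_\sigma$, and bilinearity of the tensor contraction yields on internal nodes
\begin{equation*}
\vec{e}((t_1,t_2)) = (\I - \Proj^\top\Proj)\,\winf((t_1,t_2)) + \Proj^\top\Proj\bigl(\T(\I,\vec{e}(t_1),\winf(t_2)) + \T(\I,\Proj^\top\winf_{\hat{A}}(t_1),\vec{e}(t_2))\bigr).
\end{equation*}

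For the first inequality, I would prove by induction on $\size{t}$ the pointwise bound $|\vec{e}(t)_i| \leq C(t)\,\singv_{\hat{n}+1}/\sqrt{\singv_i}$ for every $i\in[n]$, with $C(\sigma)\leq 1$ and a product-type recurrence $C((t_1,t_2))\leq c_0 n^2 (1+C(t_1))(1+C(t_2))$, which unfolds to $C(t)\leq (c_1 n^2)^{M}$ for $M=\size{t}$ and some absolute constant $c_1$. The crucial ingredient is Theorem~\ref{prop:singv_and_SVTA_params}: whenever an internal summation index $j$ (resp.\ $k$) in the contraction ranges over the discarded block $\{\hat{n}+1,\dots,n\}$, one selects the variant $|\T(i,j,k)|\leq \sqrt{\singv_j}/\sqrt{\singv_i\singv_k}$ (resp.\ the symmetric one in $k$), which combined with $|\winf(t_2)_k|\leq \sqrt{\singv_k}$ leaves a residual factor $\singv_j/\sqrt{\singv_i}$; summing over $j>\hat{n}$ then produces the desired factor $\sum_{j>\hat{n}}\singv_j\leq n\singv_{\hat{n}+1}$. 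The easy case $i>\hat{n}$ follows directly from $|\winf((t_1,t_2))_i|\leq \sqrt{\singv_i}\leq \singv_{\hat{n}+1}/\sqrt{\singv_i}$. Combining the per-coordinate bound with $|\wzero_i|\leq \sqrt{\singv_i}$ yields
\begin{equation*}
|f(t)-\hat{f}(t)| \leq \sum_{i=1}^{n}|\wzero_i|\,|\vec{e}(t)_i| \leq n\, C(t)\, \singv_{\hat{n}+1} \leq \bigo{n^{2M+1}\,\singv_{\hat{n}+1}},
\end{equation*}
and solving for $M$ gives the first inequality (the absolute constants can be absorbed in the logarithms).

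For the second inequality, I would combine this per-tree bound with a standard count of full binary trees over $\Sigma$: the number of $t\in\Ts$ with $\size{t}=s$ is $C_s|\Sigma|^{s+1}\leq (4|\Sigma|)^{s+1}$, where $C_s\leq 4^s$ is the $s$-th Catalan number. Summing the resulting geometric series,
\begin{equation*}
\sum_{t:\,\size{t}<M}|f(t)-\hat{f}(t)| \leq \sum_{s=0}^{M-1}(4|\Sigma|)^{s+1}\cdot \bigo{n^{2s}\,\singv_{\hat{n}+1}} \leq \bigo{(4|\Sigma|n^2)^{M}\,\singv_{\hat{n}+1}},
\end{equation*}
and the stated condition on $M$ ensures this is below $\varepsilon$.

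The main obstacle will be the inductive step of the first part: one has to set up a case analysis on whether $j,k$ lie in the retained block $\{1,\dots,\hat{n}\}$ or the discarded block $\{\hat{n}+1,\dots,n\}$, and in each case select the correct variant of the three-way minimum in Theorem~\ref{prop:singv_and_SVTA_params}, so that (i) the $(\I-\Proj^\top\Proj)\winf((t_1,t_2))$ boundary contribution stays proportional to $\singv_{\hat{n}+1}/\sqrt{\singv_i}$, and (ii) the propagation through each tensor contraction blows the constant up only by a factor $\bigo{n^2}$. Once this case analysis is correctly arranged so that a factor of $\singv_{\hat{n}+1}$ appears precisely when the summation visits the discarded block, everything else is routine constant bookkeeping.
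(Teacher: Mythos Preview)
Your approach is correct and leads to the same final estimate, but it is genuinely different from the paper's. The paper introduces \emph{multicontexts} $b\in\Bs_M$ (binary tree shapes with $M$ leaf placeholders) and factors $f(t)=\langle\wzeroB(b),\winfB(w)\rangle$, where $\wzeroB(b)\in(\R^n)^{\otimes M}$ is built \emph{top-down} from $\wzero$ by successive contractions with $\T$, and $\winfB(w)=\winf_{w_1}\otimes\cdots\otimes\winf_{w_M}$ collects the leaf vectors. Crucially, the truncation is realised as an $n$-state WTA with $\hat\wzero=\Proj^\top\Proj\,\wzero$, $\hat\T=\T(\I,\Proj^\top\Proj,\Proj^\top\Proj)$ but $\hat\winf_\sigma=\winf_\sigma$ \emph{unprojected}, so $\hat\winfB(w)=\winfB(w)$ and the whole error lands in $\wzeroB(b)-\hat\wzeroB(b)$. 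A direct induction on $M$ gives $|(\wzeroB(b)-\hat\wzeroB(b))_{i_1\cdots i_M}|\le n^{M-1}\singv_{\hat n+1}\prod_m\singv_{i_m}^{-1/2}$, and pairing with $|\winfB(w)_{i_1\cdots i_M}|\le\prod_m\sqrt{\singv_{i_m}}$ yields $|f(t)-\hat f(t)|\le n^{2M-1}\singv_{\hat n+1}$ cleanly, with no $(1+C)$ factors. Your bottom-up recursion on $\vec e(t)=\winf(t)-\Proj^\top\winf_{\hat A}(t)$ is the dual picture; it works, but it forces you to control $|(\Proj^\top\winf_{\hat A}(t_1))_j|$, which is what makes the $(1+C(t_1))(1+C(t_2))$ appear and the constant bookkeeping heavier.

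One place where your write-up is muddled: the factor $\singv_{\hat n+1}$ is \emph{not} produced by ``summing over $j>\hat n$'' at each contraction. Once the inductive hypothesis $|\vec e(t_\ell)_j|\le C(t_\ell)\,\singv_{\hat n+1}/\sqrt{\singv_j}$ holds for \emph{all} $j$, the sum $\sum_{j,k}|\T(i,j,k)|\,|\vec e(t_1)_j|\,|\winf(t_2)_k|$ with the variant $|\T(i,j,k)|\le\sqrt{\singv_j}/\sqrt{\singv_i\singv_k}$ collapses directly to $n^2 C(t_1)\,\singv_{\hat n+1}/\sqrt{\singv_i}$, no block split on $j,k$ required. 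The only genuine case distinction is on the \emph{output} index $i$: for $i>\hat n$ the propagated term vanishes and $\vec e((t_1,t_2))_i=\winf((t_1,t_2))_i$, bounded by $\sqrt{\singv_i}\le\singv_{\hat n+1}/\sqrt{\singv_i}$; for $i\le\hat n$ the boundary term vanishes and the induction carries $\singv_{\hat n+1}$ through. Your treatment of the second bullet (Catalan count $C_s\le 4^s$, $|\Sigma|^{s+1}$ labelings, geometric sum) matches the paper exactly.
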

\begin{proof}
See Appendix~\ref{proof:thm:svtabound}.
\end{proof}
 
Since $\singv_{\hat{n}+1} > \singv_{\hat{n}+2} > \cdots > \singv_n$, 
this theorem shows that the smaller the singular values associated with the 
removed states are, the better will be the approximation. As a direct 
consequence, the error introduced by the truncation grows with the 
number of states removed.
The dependence on the size of the trees comes from the propagation 
of the error during the contractions of the tensor $\hat{\T}$ of the 
truncated SVTA.

The decay of singular values can be very slow in the worst case, but in practice is not unusual to observe an exponential decay on the tail. For example, this is shown to be the case for the SVTA we compute in Section~\ref{sec:experiments}. Assuming such an exponential decay of the form $\singv_i = C \theta^i$ for some $0 < \theta < 1$, the second bound above on the size of the trees for which $\sum_{\size{t} < M} |f(t) - \hat{f}(t)| < \varepsilon$ specializes to
\begin{equation*}
\frac{(\hat{n}+1)\log(1/\theta)+\log(\varepsilon)+\log(C)}{\log(4|\Sigma|n^2)}\enspace.
\end{equation*}
It is interesting to observe that the dependence of this bound on the number of total/removed states is $O(\hat{n} / \log(n))$.

\section{Experiments}\label{sec:experiments}

In this section, we assess the performance of our method on a model arising from
real-world data, by using a PCFG learned from a text corpus as our initial
model. Before presenting our experimental setup and results, we recall the
standard mapping between WCFG and WTA.

\subsection{Converting WCFG to WTA}

A \emph{weighted context-free grammar} (WCFG) in Chomsky normal form is a tuple 
$G = \langle \mathcal{N}, \Sigma, \mathcal{R}, \weight \rangle$ 
where $\mathcal{N}$ is the finite set of nonterminal symbols, $\Sigma$ is the finite 
set of words, with $\Sigma \cap \mathcal{N} = \emptyset$, $\mathcal{R}$ is a set
of rules having the form $(a \to bc) $, $(a \to x)$ or $(\to a)$ for
$a,b,c\in\mathcal{N}, x\in\Sigma$, and $ \weight: \mathcal{R} \to \R$ is the
weight function which is extended to the set of all possible rules by letting $
\weight(\delta) = 0$ for all rules $\delta \not\in \mathcal{R}$. 

A WCFG $G$ assigns a weight to each derivation tree $\tau$ of the grammar given
by $\weight(\tau) = \prod_{\delta \in \mathcal{R}}
w(\delta)^{\sharp_\delta(\tau)}$ (where $\sharp_\delta(\tau)$ is the number of
times the rule $\delta$ appears in $\tau$), and it computes a function
$f_G: \Sigma^+ \to \R$ defined by $f_G(w) = \sum_{\tau \in T(w)} \weight(\tau)$
for any $w\in \Sigma^+$, where $T(w)$ is the set of trees deriving the word 
$w$.

Given a WCFG $G$, we can build a WTA that assigns to each binary tree $
t\in\Ts_\Sigma$ the sum of the weights of all derivation trees of $G$ having the
same topology as $t$.
Let $G = \langle \mathcal{N}, \Sigma, \mathcal{R}, w\rangle$ be a WCFG in
normal form with $\mathcal{N} = [n]$. Let $A = \wtaSigma$ be the WTA with $n$
states defined by $\wzero(i) = \weight(\to i)$ for all $i\in [n]$, $\T(i,j,k) =
\weight(i\to jk)$ for all $i,j,k\in[n]$, and $\winf_\sigma(i) = \weight(i\to
\sigma)$ for all $i\in[n]$, $\sigma \in \Sigma$. 
Then for all $w \in \Sigma^+$ we have $f_G(w) = \sum_{t\in\Ts_\Sigma:
\yield{t}=w} f_A(t)\enspace.$ 
It is important to note that in this conversion the number of states in $A$
corresponds to the number of non-terminals in $G$.
A similar construction can be used to convert any WTA  to a WCFG where
each state in the WTA is mapped to a non-terminal in the WCFG.

\subsection{Experimental Setup and Results}

In our experiments, we used the annotated corpus of german newspaper texts
NEGRA \cite{skut97}. 
We use a standard setup, in which the first 18,602 sentences are used as a
training set, the next 1,000 sentences as a development set and the last 1,000
sentences as a test set $S_{\mathrm{test}}$.
All trees are binarized as described in \cite{cohen-13b}. We extract a binary
grammar in Chomsky normal form from the data, and then estimate its
probabilities using maximum likelihood. The resulting PCFG has $n = 211$
nonterminals.
We compare our method against the ones described in \cite{cohen-13a}, who use
tensor decomposition algorithms \cite{chi-11} to decompose the tensors of an
underlying PCFG.%
\footnote{We use two tensor decomposition algorithms from the tensor Matlab
toolbox: \texttt{pqnr}, which makes use of projected quasi-Newton and
\texttt{mu}, which uses a multiplicative update. See
\url{http://www.sandia.gov/~tgkolda/TensorToolbox/index-2.6.html}.}

We used three evaluation measures: $\ell_2$ distance (between the functions of
type $\Ts_\Sigma \to \R$ computed by the original WTA and the one computed by
its approximation), perplexity on a test set, and parsing accuracy on a test set
(comparing the tree topology of parses using the bracketing F-measure).
Because the number of states on a WTA and the CP-rank of tensor decomposition
method are not directly comparable, we plotted the results using the number of
parameters needed to specify the model on the horizontal axis.
This number is equal to $\hat{n}^3$ for a WTA with $\hat{n}$ states, and it
is equal to $3 R n$ when the tensor $\T$ is approximated with a tensor of
CP-rank $R$ (note in both cases these are the number of parameters needed to
specify the tensor occurring in the model).

The $\ell_2$ distance between the original function $f$ and its minimization
$\hat{f}$, $\norm{f-\hat{f}}_2^2 = \sum_{t \in \Ts} (f(t)-\hat{f}(t))^2$, can
be approximated to an arbitrary precision using the Gram matrices of the
corresponding WTA (which follows from observing that $(f-\hat{f})^2$ is
rational).
The perplexity of $\hat{f}$ is defined by $2^{-H_\mathrm{test}}$, where
$H_\mathrm{test} = \sum_{t\in S_{\mathrm{test}}} f(t) \log_2 \hat{f}(t)$ and
both $f$ and $\hat{f}$ have been normalized to sum to one over the test set.
The results are plotted in Figure~\ref{fig:l2_perplex}, where an horizontal
dotted line represents the performance of the original model. We see that
our method outperforms the tensor decomposition methods both in terms
of $\ell_2$ distance and perplexity. We also remark that our method obtains very smooth curves, which
comes from the fact that it does not suffer from local optima problems like the
tensor decomposition methods.

\begin{figure}
\begin{center}

\includegraphics[scale=0.4]{./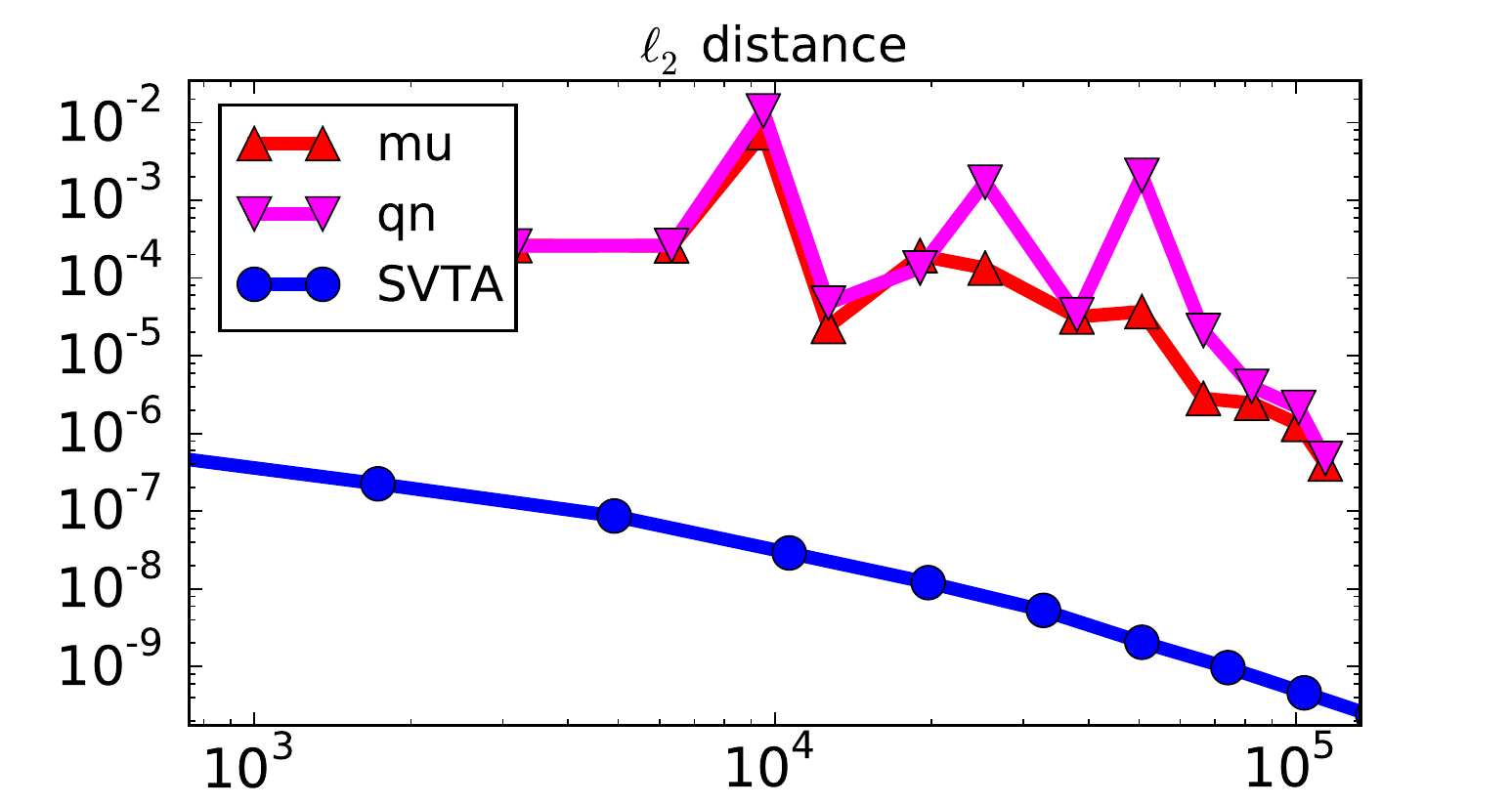}\hspace{-0.5cm}%
\includegraphics[scale=0.4]{./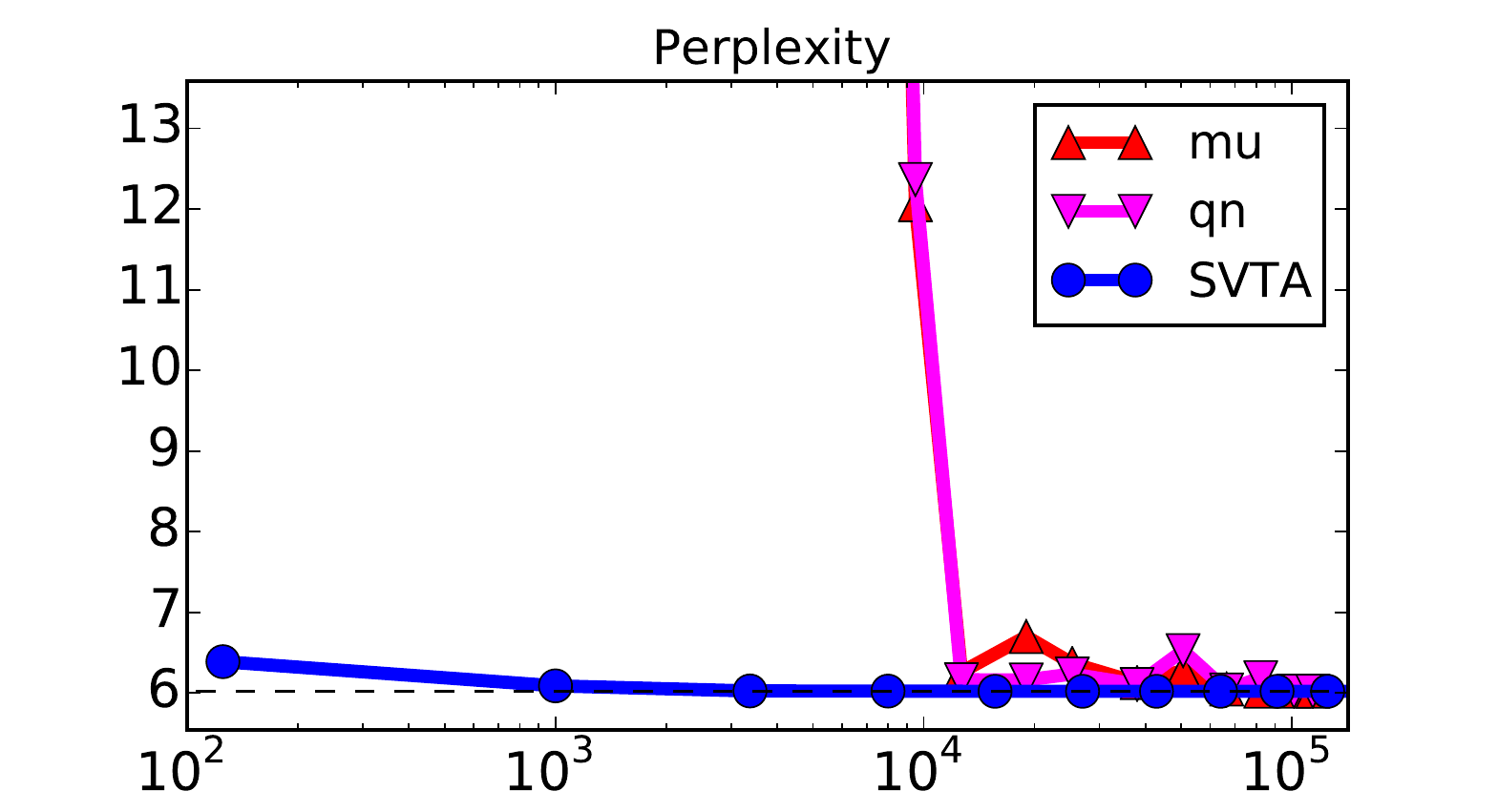}
\vspace*{-1em}
\caption{(top) $\ell_2$ distance between functions. (bottom) Perplexity on the
test set. The $x$ axis denotes in both cases the number of parameters used by the approximation.}
\label{fig:l2_perplex}
\end{center}
\end{figure}

\begin{figure}
\begin{center}
\includegraphics[scale=0.4]{./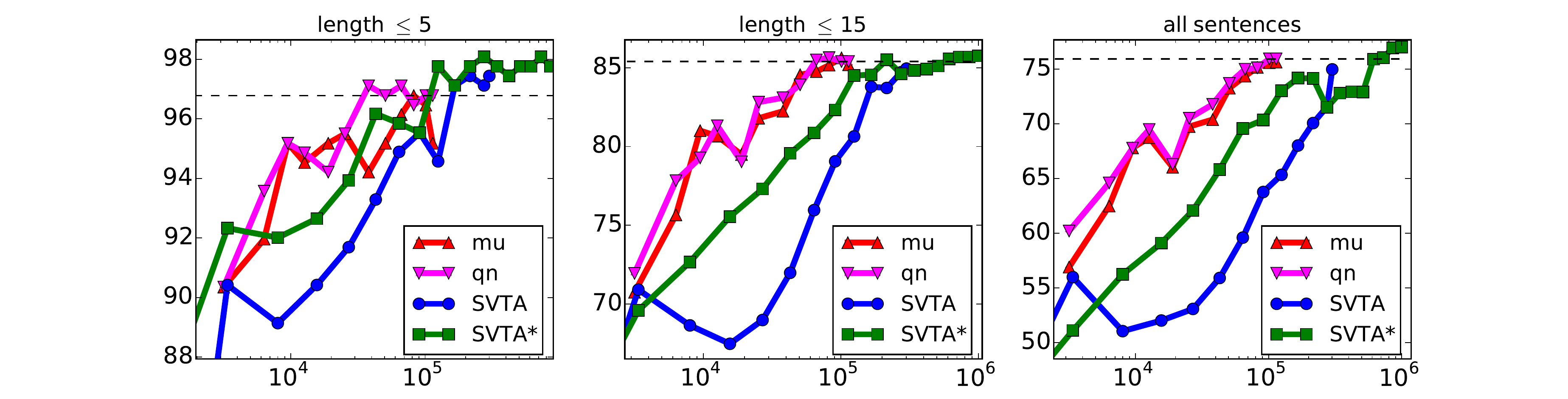}

\caption{Parsing accuracy on the test set for different sentence lengths. The $x$-axis
denote the number of parameters used by the approximation. The $y$-axis denotes bracketing
accuracy.}
\label{fig:parsing}
\end{center}
\end{figure}

For parsing we use minimum Bayes risk decoding, maximizing the sum of the
marginals for the nonterminals in the grammar, essentially choosing the best tree
topology given a string \cite{goodman-96}.
The results for various length of sentences are shown in
Figure~\ref{fig:parsing}, where we see that our method does not perform as well
as the tensor decomposition methods in terms of parsing accuracy on long
sentences. 
In this figure, we also plotted the results for a slight modification of our
method (SVTA$^*$) that is able to achieve competitive performances.  
The SVTA$^*$ method gives more importance to long sentences in the minimization
process. This is done by finding the highest constant $\gamma > 0$ such that the
function $f_\gamma : t \mapsto \gamma^{\size{t}} f(t)$ is still strongly
convergent.
This function is then approximated by a low-rank WTA computing $\hat{f}_\gamma$,
and we let $\hat{f} :t \mapsto \gamma^{-\size{t}} \hat{f}_\gamma(t)$ (which is
rational).  In our experiment, we used $\gamma=2.4$.
While the SVTA$^*$ method improved the parsing accuracy, it had no significant
repercussion on the $\ell_2$ and perplexity measures.
We believe that the parsing accuracy of our method could be further improved.
Seeking techniques that combines the benefits of SVTA and previous works is
a promising direction.

\section{Conclusion}\label{sec:conclusion}

We described a technique for approximate minimization of WTA, yielding a model
smaller than the original one which retains good approximation properties. Our
main algorithm relies on a singular value decomposition of an infinite Hankel
matrix induced by the WTA. We provided theoretical guarantees on the error
induced by our minimization  method.
Our experiments with real-world parsing data show
that the minimized WTA, depending on the number of singular values used,
approximates well the original WTA on three measures: perplexity, bracketing
accuracy and $\ell_2$ distance of the tree weights.
Our work has connections with spectral learning techniques for WTA, and exhibits
similar properties as those algorithms; e.g.\ absence of local optima.
In future work we plan to investigate the applications of our approach to the
design and analysis of improved spectral learning algorithms for WTA.

\bibliographystyle{apalike}
\bibliography{main}

\appendix
\section{Proof of Theorem~\ref{thm:rankfact}}
\label{proof:thm:rankfact}
\begin{theorem*}%
Let $f : \Ts \to \R$ be rational. If $\H_f = \P \S$ is a rank factorization,
then there exists a minimal WTA $A$ computing $f$ such that $\P_A = \P$ and
$\S_A = \S$.
\end{theorem*}
\begin{proof}
Let $n = \rank(f)$. Let $B$ be an arbitrary minimal WTA computing $f$.
Suppose $B$ induces the rank factorization $\H_f = \P' \S'$.
Since the columns of both $\P$ and $\P'$ are basis for the column-span of
$\H_f$, there must exists a change of basis $\Q \in \R^{n \times n}$ between
$\P$ and $\P'$. That is, $\Q$ is an invertible matrix such that $\P' \Q = \P$.
Furthermore, since $\P' \S' = \H_f = \P \S = \P' \Q \S$ and $\P'$ has full
column rank, we must have $\S' = \Q \S$, or equivalently, $\Q^{-1} \S' = \S$.
Thus, we let $A = B^{\Q}$, which immediately verifies $f_A = f_B = f$.
It remains to be shown that $A$ induces the rank factorization $\H_f = \P \S$.
Note that when proving the equivalence $f_A = f_B$ we already showed $\winf_A(t)
= \Q^{-1} \winf_B(t)$, which means we have $\S_A = \Q^{-1} \S' = \S$.
To show $\P_A = \P' \Q$ we need to show that for any $c \in \Cs$ we have
$\wzero_{A}(c)^\top = \wzero_B(c)^\top \Q$. This will immediately follow if we
show that $\ctm_A(c) = \Q^{-1} \ctm_B(c) \Q$. If we proceed by induction on
$\drop{c}$, we see the case $c = \gap$ is immediate, and for $c = (c',t)$ we get
\begin{align*}
\ctm_A((c',t)) &= (\T(\Q^{-\top},\Q,\Q))(\I,\ctm_A(c'),\winf_A(t)) \\
&=
(\T(\Q^{-\top},\Q,\Q))(\I,\Q^{-1} \ctm_B(c') \Q, \Q^{-1} \winf_B(t)) \\
&=
\T(\Q^{-\top},\ctm_B(c') \Q,\winf_B(t)) \\
&=
\Q^{-1} \T(\I,\ctm_B(c'),\winf_B(t)) \Q \enspace.
\end{align*}
Applying the same argument mutatis mutandis for $c = (t,c')$ completes the
proof.
\end{proof}

\section{Proof of Theorem~\ref{thm:Hsvd}}
\label{proof:thm:Hsvd}
\begin{theorem*}
If $f : \Ts_\Sigma \to \R$ is rational and strongly convergent, then $\H_f$ admits a
singular value decomposition.
\end{theorem*}
\begin{proof}
The result will follow if we show that $\H_f$ is the matrix of a compact
operator on a Hilbert space \cite{conway1990course}. The main obstruction to
this approach is that the rows and columns of $\H_f$ are indexed by different
objects (trees vs.\ contexts). Thus, we will need to see $\H_f$ as an operator
on a larger space that contains both these objects.

Recall we have $\Ts_\Sigma \subset \Ts_{\Sigma'}$ and $\Cs_{\Sigma} \subset
\Ts_{\Sigma'}$.
Given two functions $g, g' : \Ts_{\Sigma'} \to \R$ we define their inner product
to be
$\left< g, g' \right> = \sum_{t' \in \Ts_{\Sigma'}} g(t') g'(t')$.
Let $\norm{g} = \sqrt{|\left< g, g \right>|}$ be the induced norm and let $\sT$
be the space of all functions $g : \Ts_{\Sigma'} \to \R$ such that $\norm{g} <
\infty$.
Note that $\sT$ with a Hilbert space, and that since $\Ts_{\Sigma'}$ is
countable, it actually is a separable Hilbert space isomorphic to $\ell^2$, the
spaces of infinite square summable sequences.
Given set $\X \subset \Ts_{\Sigma'}$ we define $\sT(\X) = \{ g \in \sT \;|\;
g(t') = 0, t' \in \Ts_{\Sigma'} \setminus \X \}$.

Now let $C_f : \sT \to \sT$ be the linear operator on $\sT$ given by
\begin{equation*}
(C_f g)(t') =
\begin{cases}
\sum_{t \in \Ts_\Sigma} f(t'[t]) g(t) & \text{if $t' \in \Cs_\Sigma$} \\
0 & \text{if $t' \notin \Cs_\Sigma$} \enspace.
\end{cases}
\end{equation*}
Now note that by construction we have $\sT(\Ts_\Sigma) \subseteq \Ker(C_f)$ and
$\Im(C_f) \subseteq \sT(\Cs_\Sigma)$.
Hence, a simple calculation shows that given the decompositions $C_f :
\sT(\Ts_\Sigma)^\bot \oplus \sT(\Ts_\Sigma) \to \sT(\Cs_\Sigma) \oplus
\sT(\Cs_\Sigma)^\bot$, the matrix of $C_f$ is
\begin{equation*}
\mat{C}_f =
\left[
\begin{array}{cc}
\H_f & \mat{0} \\
\mat{0} & \mat{0}
\end{array}
\right] \enspace.
\end{equation*}
Thus, if $C_f$ is a compact operator, then $\H_f$ admits an SVD.
Since $\H_f$ has finite rank, we only need to show that $C_f$ is a bounded
operator.

Given $c \in \Cs_\Sigma$ we define $f_c \in \sT(\Ts_\Sigma)$ given by $f_c(t) =
f(c[t])$ for $t \in \Ts_\Sigma$.
Now let $g \in \sT$ with $\norm{g} = 1$ and recall $C_f$ is bounded if
$\norm{C_f g} < \infty$ for every $g \in \sT$ with $\norm{g} = 1$.
Indeed, because $f$ is strongly convergent we have:
\begin{align*}
\norm{C_f g}^2
&=
\sum_{t' \in \Ts_{\Sigma'}} (C_f g)(t')^2\\
&=
\sum_{c \in \Cs_{\Sigma}} (C_f g)(c)^2\\
&=
\sum_{c \in \Cs_{\Sigma}} \left(\sum_{t \in \Ts_\Sigma} f(c[t]) g(t)\right)^2\\
&=
\sum_{c \in \Cs_{\Sigma}} \left< f_c, g \right>^2 \\
&\leq
\norm{g}^2 \sum_{c \in \Cs_{\Sigma}} \norm{f_c}^2\\
&=
\sum_{c \in \Cs_{\Sigma}} \sum_{t' \in \Ts_{\Sigma'}} f_c(t')^2\\
&=
\sum_{c \in \Cs_{\Sigma}} \sum_{t \in \Ts_{\Sigma}} f(c[t])^2\\
&=
\sum_{t \in \Ts_{\Sigma}} |t| f(t)^2 \\
&\leq
\sup_{t \in \Ts_{\Sigma}} |f(t)| \cdot \sum_{t \in \Ts_{\Sigma}} |t| |f(t)| <
\infty \enspace,
\end{align*}
where we used the Cauchy--Schwarz inequality, and the fact that $\sup_{t \in
\Ts_{\Sigma}} |f(t)|$ is bounded when $f$ is strongly convergent.
\end{proof}

\section{Proof of Theorem~\ref{thm:fpoint}}
\label{proof:thm:fpoint}
\begin{theorem*}%
Let $F:\R^{n^2}\to\R^{n^2}$ be the mapping defined by $F(\vec{v}) =
\T^\kron(\mat{I},\vec{v},\vec{v}) + \sum_{\sigma \in \Sigma}
\winf^\kron_\sigma$. Then the following hold:
\begin{itemize}
\item[(i)] $\vec{s}$ is a fixed-point of $F$; i.e.\ $F(\vec{s}) = \vec{s}$.
\item[(ii)] $\vec{0}$ is in the basin of attraction of $\vec{s}$; i.e.\
$\lim_{k \to \infty} F^k(\vec{0}) = \vec{s}$.
\item[(iii)] The iteration defined by $\vec{s}_0=\vec{0}$ and
$\vec{s}_{k+1}=F(\vec{s}_k)$ converges linearly to $\vec{s}$; i.e. there exists
$0<\rho<1$ such that $\norm{\vec{s}_k - \vec{s}}_2 \leq \bigo{\rho^k}$.
\end{itemize} 
\end{theorem*}
\begin{proof}
(i)
We have $\T^\kron(\mat{I}, \vec{s},\vec{s}) = \sum_{t,t'\in \Ts}
\T^\kron(\mat{I}, \winf^\kron(t), \winf^\kron(t')) = \sum_{t,t'\in\Ts}
\winf^\kron((t,t')) = \sum_{t \in \Ts^{\geq 1}} \winf^\kron(t)$ where $\Ts^{\geq
1}$ is the set of trees of depth at least one.
Hence $F(\vec{s}) = \sum_{t\in \Ts^{\geq 1}} \winf^\kron(t) +
\sum_{\sigma\in\Sigma} \winf^\kron_\sigma = \vec{s}$.

(ii) Let $\Ts^{\leq k}$ denote the set of all trees with depth at most $k$. We
prove by induction on $k$ that $F^k(\vec{0}) = \sum_{t\in \Ts^{\leq k}}
\winf^\kron(t)$, which implies that $\lim_{k\to \infty} F^k(\vec{0}) = \vec{s}$.
This is straightforward for $k=0$. Assuming it is true for all naturals up to
$k-1$, we have
\begin{align*}
F^k(\vec{0})
&=
\T^\kron(\mat{I}, F^{k-1}(\vec{0}), F^{k-1}(\vec{0})) + \sum_{\sigma\in\Sigma}
\winf^\kron_\sigma \\ 
&=
\sum_{t,t'\in \Ts^{\leq k-1}} \T^\kron(\mat{I}, \winf^\kron(t), \winf^\kron(t'))
+ \sum_{\sigma\in\Sigma} \winf^\kron_\sigma \\
&=
\sum_{t,t'\in \Ts^{\leq k-1}} \winf^\kron((t,t')) + \sum_{\sigma\in\Sigma}
\winf^\kron_\sigma\\
&=
\sum_{t\in \Ts^{\leq k}} \winf^\kron(t) \enspace.
\end{align*}

(iii) Let $\vec{E}$ be the Jacobian of $F$ around $\vec{s}$, we show that the
spectral radius $\rho(\mat{E})$ of $\mat{E}$ is less than one, which implies the
result by Ostrowski's theorem (see \cite[Theorem~8.1.7]{Ortega_1990}). 

Since $A$ is minimal, there exists trees $t_1,\cdots,t_n\in\Ts$ and contexts
$c_1,\cdots,c_n\in \Cs$ such that both $\{\winf(t_i)\}_{i \in [n]}$ and
$\{\wzero(c_i)\}_{i \in [n]}$ are sets of linear independent vectors in $\R^n$
\cite{bailly2010spectral}. Therefore, the sets
$\{\winf(t_i) \kron \winf(t_j)\}_{i,j \in [n]}$ and $\{\wzero(c_i) \kron
\wzero(c_j)\}_{i,j \in [n]}$ are sets of linear independent vectors in $\R^{n^2}$.
Let $\vec{v} \in \R^{n^2}$ be an eigenvector of $\mat{E}$ with eigenvalue
$\lambda \neq 0$, and let $\vec{v} = \sum_{i,j \in [n]} \beta_{i,j}
(\winf(t_i)\kron\winf(t_j))$ be its expression in terms of the basis given by
$\{\winf(t_i) \kron \winf(t_j)\}$.
For any vector $\vec{u} \in \{\wzero(c_i) \kron \wzero(c_j)\}$ we have
\begin{align*}
 \lim_{k \to \infty} \vec{u}^\top \mat{E}^k \vec{v} 
 \leq
\lim_{k \to \infty} |\vec{u}^\top \mat{E}^k \vec{v}| 
\leq
\sum_{i,j \in [n]} |\beta_{i,j}| \lim_{k \to \infty} |\vec{u}^\top
\mat{E}^k (\winf(t_i)\kron\winf(t_j))| = 0 \enspace,
\end{align*}
where we used Lemma~\ref{lemma:cctzero} in the last step.
Since this is true for any vector $\vec{u}$ in the basis $\{\wzero(c_i) \kron
\wzero(c_j)\}$, we have $\lim_{k\to\infty} \mat{E}^k\vec{v} = \lim_{k\to\infty}
|\lambda|^k \vec{v} = \vec{0}$, hence $|\lambda| < 1$. This reasoning holds for
any eigenvalue of $\mat{E}$, hence $\rho(\mat{E}) < 1$.
\end{proof}

\begin{lemma}
\label{lemma:cctzero}
Let $A=\wta$ be a minimal  WTA of dimension $n$ computing the strongly
convergent function $f$, and let $\vec{E}\in\R^{n^2\times n^2} $ be the Jacobian
around $\vec{s} = \sum_{t\in\Ts}\winf(t)\kron \winf(t)$ of the mapping
$F:\vec{v}\to  \T^\kron(\mat{I},\vec{v},\vec{v}) + \sum_{\sigma \in \Sigma}
\winf^\kron_\sigma$.
Then for any $c_1,c_2\in \Cs$ and any $t_1,t_2 \in \Ts$ we have $\lim_{k\to
\infty} |(\wzero(c_1)\kron \wzero(c_2))^\top \mat{E}^k
(\winf(t_1)\kron\winf(t_2))| = 0$.
\end{lemma}
\begin{proof}
Let $\ctm^\kron : \Cs \to \R^{n^2 \times n^2}$ be the context mapping
associated with the WTA $A^\kron$; i.e.\ $\ctm^\kron = \ctm_{A^\kron}$.
We start by proving by induction on $\drop{c}$ that $\ctm^\kron(c) =
\ctm(c)\kron\ctm(c)$ for all $c\in \Cs$. 
Let $\Cs^d$ denote the set of contexts $c \in \Cs$ with $\drop{c} = d$.
The statement is trivial for $c \in \Cs^0$. Assume the statement is true for all
naturals up to $d-1$ and let $c = (t,c') \in \Cs^d$ for some $t\in\Ts$ and
$c'\in\Cs^{d-1}$. Then using our inductive hypothesis we have that

\begin{align*}
\ctm^\kron(c) &=\T^\kron(\I_{n^2}, \winf(t)\kron \winf(t), \ctm(c')\kron\ctm(c'))\\
&=
\T(\I_n,\winf(t),\ctm(c'))\kron \T(\I_n,\winf(t),\ctm(c'))\\
&=
\ctm(c)\kron\ctm(c) \enspace.
\end{align*}
The case $c = (c',t)$ follows from an identical argument.

Next we use the multi-linearity of $F$ to expand $F(\vec{s} + \vec{h})$ for a
vector $\vec{h}\in \R^{n^2}$. Keeping the terms that are linear in $\vec{h}$ we
obtain that $\mat{E} = \T^\kron(\mat{I},\vec{s},\mat{I}) +
\T^\kron(\mat{I},\mat{I},\vec{s})$. 
It follows that $\mat{E} = \sum_{c\in \Cs^1} \ctm^\kron(c)$, and it can be shown
by induction on $k$ that $\mat{E}^k = \sum_{c\in \Cs^k} \ctm^\kron(c)$.

Writing $d_c = \min(\drop{c_1},\drop{c_2})$ and $d_t=\min(\depth{t_1},\depth{t_2})$,
we can see that
\begin{align*}
 \left|(\wzero(c_1)\kron \wzero(c_2))^\top \mat{E}^k
(\winf(t_1)\kron\winf(t_2))\right|
&=
\left| \sum_{c\in \Cs^k} (\wzero(c_1)\kron \wzero(c_2))^\top
\ctm^\kron(c) (\winf(t_1)\kron\winf(t_2))\right| \\
& =
\left| \sum_{c\in \Cs^k} (\wzero(c_1)^\top \ctm(c) \winf(t_1))\cdot
(\wzero(c_2)^\top \ctm(c) \winf(t_2)) \right|\\
&=
\left| \sum_{c\in \Cs^k} f(c_1[c[t_1]]) f(c_2[c[t_2]]) \right|\\
& \leq
\left(\sum_{c\in \Cs^k} |f(c_1[c[t_1]])|\right) \left(\sum_{c\in \Cs^k}
|f(c_2[c[t_2]])|\right)\\
&\leq
\left(\sum_{t\in \TF^{\geq d_c + d_t + k}} |t| |f(t)| \right)^2 \enspace,
\end{align*}
which tends to $0$ with $k \to \infty$ since $f$ is strongly convergent.
To prove the last inequality, check that any tree of the form $t' = c[c'[t]]$
satisfies $\depth{t'} \geq \drop{c} + \drop{c'} + \depth{t}$, and that for fixed
$c \in \Cs$ and $t, t' \in \Ts$ we have $|\{ c' \in \CF : c[c'[t]] = t'\}| \leq
|t'|$ (indeed, a factorization $t' = c[c'[t]]$ is fixed once the root of $t$ is
chosen in $t'$, which can be done in at most $|t'|$ different ways).
\end{proof}

\section{Proof of Theorem~\ref{gram_cvg}}
\label{proof:gram_cvg}
\begin{theorem*}%
There exists $0<\rho<1$ such that after $k$ iterations in
Algorithm~2, the approximations  $\gramcapprox$ and $\gramtapprox$ 
satisfy $\norm{\gramc- \gramcapprox}_F \leq \mathcal{O}(\rho^k)$ and  
$\norm{\gramt- \gramtapprox}_F \leq \mathcal{O}(\rho^k)$.
\end{theorem*}
\begin{proof}
The result for the Gram matrix $\gramt$ directly follows from Theorem~\ref{thm:fpoint}.
We now show how the error in the approximation of 
$\gramt = \reshape(\vec{s},n\times n)$ affects the approximation 
of $\vec{q} = (\wzero^\kron)^\top (\I - \mat{E})^{-1} = \vectorize(\gramc)$.
Let $\hat{\vec{s}}\in\R^n$ be such that 
$\norm{\vec{s} - \hat{\vec{s}}}\leq \varepsilon$,
let  $\hat{\mat{E}} = \T^\kron(\I,\hat{\vec{s}},\I)
+ \T^\kron(\I,\I,\hat{\vec{s}})$
and let $\vec{q} = (\wzero^\kron)^\top (\I - \hat{\mat{E}})^{-1}$. 
We first bound the distance between $\mat{E}$ and $\hat{\mat{E}}$. We have
\begin{align*}
\norm{\mat{E} - \hat{\mat{E}}}_F 
&=
\norm{ \T^\kron(\I,\vec{s} - \hat{\vec{s}},\I)
+ \T^\kron(\I,\I,\vec{s} - \hat{\vec{s}})}_F\\
&\leq 
2 \norm{\T^\kron}_F \norm{\vec{s} - \hat{\vec{s}}} \\
&=
\bigo{\varepsilon}
\enspace,
\end{align*}
where we used the bounds $\norm{\T(\I,\I,\v)}_F \leq \norm{\T}_F \norm{\v}$ and
$\norm{\T(\I,\v,\I)}_F \leq \norm{\T}_F \norm{\v}$.

Let $\delta = \norm{\mat{E} - \hat{\mat{E}}}$ and let $\sigma$ be the smallest 
nonzero eigenvalue of the matrix $\I - \mat{E}$. It follows from
\cite[Equation (7.2)]{invperturbation} that if $\delta < \sigma$ then
$\norm{ (\I - \mat{E})^{-1} - (\I - \hat{\mat{E}})^{-1} } \leq
\delta/(\sigma ( \sigma - \delta ))$.
Since $\delta = \mathcal{O}(\varepsilon)$ from our previous bound, the condition
$\delta \leq \sigma/2$ will be eventually satisfied as $\varepsilon \to 0$, in which
case we can conclude that
\begin{align*}
\norm{\gramc - \gramcapprox}_F
&=
\norm{ \vec{q} -\hat{\vec{q}} } \\
&\leq
\norm{ (\I - \mat{E})^{-1} - (\I - \hat{\mat{E}})^{-1} }  \norm{\wzero^\kron} \\
&\leq 
\frac{2 \delta}{\sigma^2} \norm{\wzero^\kron} \\
& = 
\mathcal{O}(\varepsilon)
\enspace.
\end{align*}
\end{proof}

\section{Proof of Theorem~\ref{prop:singv_and_SVTA_params}}
\label{proof:prop:singv_and_SVTA_params}

Let $A = \wtaSigma$ be a SVTA with $n$ states realizing a function $f$ and let 
$\singv_1 \geq \singv_2 \geq \cdots \geq \singv_n$ be the singular
values of the Hankel matrix $\mat{H}_f$.

Theorem~\ref{prop:singv_and_SVTA_params} relies on the following lemma, which explores the consequences that the fixed-point equations used to compute $\gramt$ and $\gramc$ have for an SVTA.

\begin{lemma}
\label{lemma:SVTA_singv_params}
For all $i\in [n]$, the following hold:

\begin{enumerate}
\item $\singv_i = \sum_{\sigma\in\Sigma} \winf_\sigma(i)^2+\sum_{j,k=1}^n \T(i,j,k)^2\singv_j\singv_k \enspace,$
\item $\singv_i = \wzero(j)^2 + \sum_{j,k=1}^n (\T(j,i,k)^2 + \T(j,k,i)^2)\singv_j\singv_k \enspace .$
\end{enumerate}

\end{lemma}

\begin{proof}
Let $\gramt$ and $\gramc$ be the Gram matrices associated with the rank
factorization of $\mat{H}_f$. Since $A$ is a SVTA we have $\gramt = \gramc = \mat{D}$
where $\mat{D} = \mathop{diag}(\singv_1,\cdots,\singv_n)$ is a diagonal matrix with the Hankel singular
values on the diagonal. The first equality then follows from the following fixed point 
characterization of $\gramt$:
{%
\begin{align*}
\gramt &=
\sum_{t\in \Ts} \winf(t)\winf(t)^\top \\
&=
\sum_{\sigma\in\Sigma} \winf_\sigma\winf_\sigma^\top \\
&+ 
\sum_{t_1,t_2\in\Ts} \T(\I,\winf(t_1),\winf(t_2)) \T(\I,\winf(t_1),\winf(t_2))^\top\\
&= 
\sum_{\sigma\in\Sigma} \winf_\sigma\winf_\sigma^\top + 
\mat{T}_{(1)} (\gramt\kron\gramt) \mat{T}_{(1)}^\top \enspace,
\end{align*}
}%
(where $\mat{T}_(i)$ denotes the matricization of the tensor $\T$ along the $i$th mode). 
The second equality follows from the following fixed point 
characterization of $\gramc$:
{%
\begin{align*}
\gramc
&=
\sum_{c\in \Cs} \wzero(c)\wzero(c)^\top \\
&=
\wzero\wzero^\top \\
&+ 
\sum_{c\in \Cs, t\in \Ts} \T(\wzero(c),\winf(t),\I) \T(\wzero(c),\winf(t),\I)^\top \\
&+
\sum_{c\in \Cs, t\in \Ts} \T(\wzero(c),\I,\winf(t)) \T(\wzero(c),\I,\winf(t))^\top \\
&=
\wzero\wzero^\top \\
&+
\mat{T}_{(2)} (\gramc\otimes \gramt)\mat{T}_{(2)}^\top
\\
&+
\mat{T}_{(3)} (\gramc\otimes \gramt)\mat{T}_{(3)}^\top \enspace .
\end{align*}
}%
\end{proof}

\begin{theorem*}
For any $t\in \Ts$, $c\in \Cs$ and $i,j,k\in [n]$ the following hold:
 \begin{itemize}
 \item $|\winf(t)_i| \leq \sqrt{\singv_i}$ ,
 \item $|\wzero(c)_i| \leq \sqrt{\singv_i}$ , and
 \item $|\T(i,j,k)| \leq \min\{\frac{\sqrt{\singv_i}}{\sqrt{\singv_j}\sqrt{\singv_k}},
\frac{\sqrt{\singv_j}}{\sqrt{\singv_i}\sqrt{\singv_k}},\frac{\sqrt{\singv_k}}{\sqrt{\singv_i}\sqrt{\singv_j}}\}$.
 \end{itemize}
\end{theorem*}
\begin{proof}
The third point is a direct consequence of the previous Lemma. For the first point,
let $\mat{UDV}^\top$ be the SVD of $\mat{H}_f$.
Since $A$ is a SVTA we have 
$$\winf(t)_i^2 = (\mat{D}^{1/2}\mat{V}^\top)_{i,t}^2 = \singv_i \mat{V}(t,i)^2$$
and since the rows of $\mat{V}$ are orthonormal we have $\mat{V}(t,i)^2\leq 1$.

The inequality for contexts is proved similarly by reasoning on the rows of $\mat{UD}^{1/2}$.
\end{proof}

\section{Proof of Theorem~\ref{thm:svtabound}}\label{proof:thm:svtabound}

To prove Theorem~\ref{thm:svtabound}, we will show how the computation of a
WTA on a give tree $t$ can be seen as an inner product between two tensors, one
which is a function of the topology of the tree, and one which is a function of the
labeling of its leafs (Proposition~\ref{prop:innerproduct_topo_leafs}). We will then 
show a fundamental relation between the components of the first tensor and the 
singular values of the Hankel matrix when the WTA is in SVTA normal form 
(Proposition~\ref{prop:singv_and_multicontexts}); this proposition will allow us
to show Lemma~\ref{lemma:multicontext_SVTA_truncation_bound} that bounds
the difference between components of this first tensor for the original SVTA and
its truncation. We will finally use this lemma to bound the absolute error introduced
by the truncation of an SVTA (Propositions~\ref{prop:bound_tree} and~\ref{prop:bound_sumtree}).

We first introduce another kind of contexts than the one introduced in Section~\ref{sec:approxmin},
where every leaf of a binary tree is labeled by the special symbol $*$ (which still acts
as a place holder). 
Let $\Bs$ be the set of binary trees on the one-letter alphabet $\{*\}$. 
We will call a tree $b\in \Bs$ a \emph{multicontext}. 
For any integer $M\geq 1$ we let 
$$\Bs_M =\{ b\in \Bs : |\yield{b}| = M\}$$
be the subset of multicontexts with $M$ leaves (equivalently, $\Bs_M$ is the subset
of multicontexts of size $M-1$).
Given a word $w = w_1\cdots w_M \in \Sigma^*$ and a multicontext $b\in \Bs_{M}$, 
we denote by $b[w_1,\cdots,w_M] \in \Ts_\Sigma$ the tree obtained by replacing the
$i$th occurrence of  $*$ in $b$ by $w_i$ for $i\in [M]$.
Let $b\in \Bs_M$, for any integer $m \in [M]$ we denote by 
$b\llbracket m \rrbracket\in \Bs_{M+1}$ the multicontext obtained
by replacing the $m$th occurence of $*$ in $b$ by the tree $(*,*)$.
Let $M>1$, it is easy to check that for  any $b'\in \Bs_M$, there
exist $b \in \Bs_{M-1}$ and $m \in [M-1]$ satisfying $b' = b\llbracket m \rrbracket$. 
See Figure~\ref{fig:multicontexts} for some illustrative examples.

\begin{figure}
\begin{center}
\includegraphics[scale=1]{./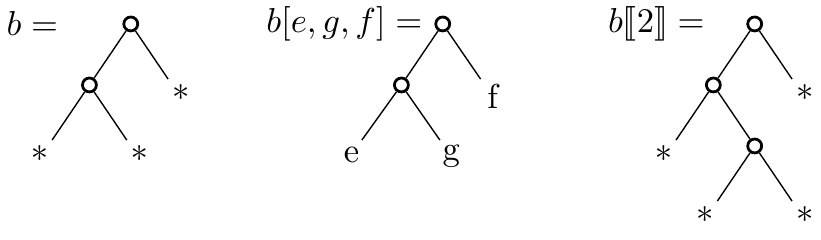}
\end{center}
\caption{A multicontext $b = ((*,*),*) \in\Bs_3$, the tree $b[e,g,f] \in \Ts$ and the
multicontext $b\llbracket 2 \rrbracket \in \Bs_4$. }
\label{fig:multicontexts}
\end{figure}

We now show how the computation of a WTA on a given tree with $M$ leaves can be
seen as an inner product between two $M$th order tensors: the first one depends
on the topology of the tree, while the second one depends on the labeling of its
leaves. Let $A = \wtaSigma$ be a WTA with $n$ states computing a function $f$. 
Given a multicontext 
$b\in \Bs_M$, we denote by $\wzeroB_A(b) \in \bigotimes_{i=1}^M \R^n$ the
$M$th order tensor inductively defined by $\wzeroB_A(*) = \wzero$ and
{
\begin{align*}
 \wzeroB_A(b\llbracket m \rrbracket)_{i_1 \cdots i_{M}} = 
\sum_{k=1}^n \wzeroB_A(b)_{i_1\cdots i_{m-1} k i_{m+2} \cdots i_{M}}
\T_{k i_{m} i_{m+1}}
\end{align*}
}%
for any $b\in \Bs_{M-1}$, $m\in [M-1]$ and $i_1, \cdots, i_M\in [n]$ 
(i.e. $\wzeroB_A(b\llbracket m \rrbracket)$ is the 
contraction of $\wzeroB_A(b)$ along the $m$th mode and $\T$ along the
first mode).
Given a word $w = w_1\cdots w_M \in \Sigma^*$, we let
$\winfB_A(w) \in \bigotimes_{i=1}^M \R^n$ be the $M$th order tensor defined
by 
{
$$\winfB_A(w)_{i_1\cdots i_M} = \winf(w_1)_{i_1} \winf(w_2)_{i_2}\cdots\winf(w_M)_{i_M}
= \prod_{m=1}^M \winf(w_m)_{i_m}$$
}%
for $i_1,\cdots, i_M \in [n]$ (i.e. $\winfB_A(w)$ is the tensor product of the $\winf(w_i)$'s). 
We will simply write $\wzeroB$ and $\winfB$ when the automaton is clear from context.

\begin{proposition}\label{prop:innerproduct_topo_leafs}
For any multicontext $b\in \Bs_M$ and 
any word $w = w_1\cdots w_M \in \Sigma^* $ we have
$$f(b[w_1,\cdots,w_M]) = \langle \wzeroB(b), \winfB(w) \rangle\enspace,$$
where the inner product between two $M$th order tensors $\ten{U}$ and $\ten{V}$
is defined by $\langle \ten{U},\ten{V}\rangle = \sum_{i_1\cdots i_M} \ten{U}(i_1,\cdots,i_M)\ten{V}(i_1,\cdots,i_M)$.
\end{proposition}
\begin{proof}[Sketch of proof]
Let $b\in \Bs_M$ and  $w = w_1\cdots w_M \in \Sigma^* $. 
Let $b_1 \in \Bs_{M-1}$ and $m\in [M-1]$ be such that
$b = b_1\llbracket m \rrbracket$.
In order to  lighten the notations and without loss of generality we assume that $m =1$.
One can check that
\begin{align*}
 \langle \wzeroB(b), \winfB(w) \rangle  =& \wzeroB(b)(\winf_{w_1}, \cdots, \winf_{w_M})\\
 =& 
 \wzeroB(b_1\llbracket 1 \rrbracket )(\winf_{w_1}, \cdots, \winf_{w_M})\\
 =&
  \wzeroB(b_1)(\winf((w_1,w_2)), \winf_{w_3}, \cdots, \winf_{w_M})\enspace .
\end{align*} 
The same reasoning can now be applied to $b_1$. Assume for example that
$b_1 = b_2 \llbracket 1 \rrbracket$ for some $b_2 \in \Bs_{M-2}$, 
we would have
\begin{align*}
 \langle \wzeroB(b), \winfB(w) \rangle  
 &=
 \wzeroB(b_1)(\winf((w_1,w_2)), \winf_{w_3}, \cdots, \winf_{w_M})\\
 &=
 \wzeroB(b_2\llbracket 1\rrbracket )(\winf((w_1,w_2)), \winf_{w_3}, \cdots, \winf_{w_M}) \\
 &=
 \wzeroB(b_2)(\winf(((w_1,w_2),w_3)), \winf_{w_4}, \cdots, \winf_{w_M})\enspace .
\end{align*}
By applying the same argument again and again we will eventually obtain
\begin{align*}
\langle \wzeroB(b), \winfB(w) \rangle  &=  \wzeroB(b_{M-1})(\winf(b[w_1,\cdots,w_M])) \\
&=  \wzeroB(*)(\winf(b[w_1,\cdots,w_M]))\\
&= \wzero^\top \winf(b[w_1,\cdots,w_M])\\
&= f(b[w_1,\cdots,w_M])\enspace. \qedhere
\end{align*}

\end{proof}

Suppose now that $A = \wtaSigma$ is an SVTA with $n$ states for $f$ and let 
$\singv_1 \geq \singv_2 \geq \cdots \geq \singv_n$ be the singular
values of the Hankel matrix $\mat{H}_f$. 
The following proposition shows a relation --- similar to the one presented in 
Theorem~\ref{prop:singv_and_SVTA_params} --- between the components of
the tensor $\wzeroB(b)$ (for any multicontext $b$) and the singular values of the Hankel matrix.

\begin{proposition}\label{prop:singv_and_multicontexts}
If $A = \wtaSigma$ is an SVTA, then for any $b\in \Bs_M$ and 
any $i_1,\cdots,i_M \in [n]$ the following holds:
$$|\wzeroB(b)_{i_1\cdots i_M}| \leq n^{M-1} \min_{p\in [M]} \{\singv_{i_p} \}
 \prod_{m=1}^M \frac{1}{\sqrt{\singv_{i_m}}} \enspace.$$
 
\end{proposition} 
\begin{proof}
We proceed by induction on $M$. If $M=1$ we have $b=*$ and
$$|\wzeroB(*)_i| = |\wzero_i| \leq \sqrt{\singv_i} = \frac{\singv_i}{\sqrt{\singv_i}}\enspace.$$
Suppose the result holds for multicontexts in $\Bs_{M-1}$ and let
$b'\in \Bs_M$. Let $m \in [M]$ and $b\in \Bs_{M-1}$ be such that 
$b' = b\llbracket m \rrbracket$. Without loss of generality and to lighten
the notations we assume that $m = 1$. Start by writing:
\begin{align*}
|\wzeroB(b')_{i_1\cdots i_M}| = |\wzeroB(b\llbracket 1 \rrbracket)_{i_1 \cdots i_M}| 
&=
\left| \sum_{k=1}^n \wzeroB_A(b)_{ k i_{3} \cdots i_{M}}
\T_{k i_1 i_2} \right| 
\leq
\sum_{k=1}^n \left| \wzeroB_A(b)_{ k i_{3} \cdots i_{M}}
\T_{k i_1 i_2} \right|
\end{align*}
Remarking that the third inequality in Theorem~\ref{prop:singv_and_SVTA_params}
can be rewritten as $|\T_{ijk}| \leq \frac{\min\{\singv_i, \singv_j,
\singv_k\}}{\sqrt{\singv_i}\sqrt{\singv_j}\sqrt{\singv_k}}$, we have for any $k
\in [n]$:
\begin{align*}
\left| \wzeroB_A(b)_{ k i_{3} \cdots i_{M}} \T_{k i_1 i_2} \right| 
\leq&
n^{M-2} \min\{\singv_k,\singv_{i_3},\cdots, \singv_{i_M}\} \frac{1}{\sqrt{\singv_k}} 
\prod_{m=3}^M \frac{1}{\sqrt{\singv_{i_m}}} 
\frac{\min\{\singv_k, \singv_{i_1}, \singv_{i_2}\}}{\sqrt{\singv_k}\sqrt{\singv_{i_1}}\sqrt{\singv_{i_2}}} \\
=&
n^{M-2} \frac{1}{\singv_k}  
\prod_{m=1}^M \frac{1}{\sqrt{\singv_{i_m}}} 
\min\{\singv_k,\singv_{i_3},\cdots, \singv_{i_M}\} \min\{\singv_k, \singv_{i_1}, \singv_{i_2}\}  \\
\leq &
n^{M-2} \min_{p\in [M]} \{\singv_{i_p} \}
 \prod_{m=1}^M \frac{1}{\sqrt{\singv_{i_m}}} \enspace,
\end{align*}
where we used that
$$\min\{\singv_k,\singv_{i_3},\cdots, \singv_{i_M}\} \min\{\singv_k, \singv_{i_1}, \singv_{i_2}\} \leq \singv_k \min\{\singv_{i_1},\cdots, \singv_{i_M}\}$$
Summing over $k$ yields the desired bound.
\qedhere
\end{proof}

Let $\hat{f}$ be the function computed
by the SVTA truncation of $A$ to $\hat{n}$ states. 
Let $\mat{\Pi} \in \R^{n\times n}$ be the diagonal matrix defined by
$\mat{\Pi}(i,i) = 1$ if $i \leq \hat{n}$ and $0$ otherwise. It is easy to 
check that the WTA 
$\hat{A} = \langle\hat{\wzero},\hat{\T},\hat{\winf}_\sigma\rangle$,
where $ \hat{\wzero} =  \mat{\Pi}\wzero$, $\hat{\T}=\T(\I, \mat{\Pi}, \mat{\Pi})$ and
$\hat{\winf}_\sigma =\winf_\sigma$,
computes the function $\hat{f}$. We let $\hat{\winf}(t) = \winf_{\hat{A}}(t)$
for any tree $t$ and similarly for $\hat{\wzero}(c)$, $\hat{\winfB}(w)$ and $\hat{\wzeroB}(c)$.  

We can now prove the following Lemma that bounds the absolute difference
between the components of the tensors $\wzeroB(b)$ and $\hat{\wzeroB}(b)$ for
a given multicontext $b$.

\begin{lemma}\label{lemma:multicontext_SVTA_truncation_bound}
For any $b\in \Bs_M$ and any $i_1,\cdots i_M \in [n]$ we have 
$$|(\wzeroB(b) - \hat{\wzeroB}(b))_{i_1\cdots i_M} | 
\leq \singv_{\hat{n} + 1}\ n^{M-1}  \prod_{m=1}^M \frac{1}{\sqrt{\singv_{i_m}}} \enspace .$$
\end{lemma} 
\begin{proof}
It is easy to check that when there exists at least one $m\in [M]$ such that
$i_m > \hat{n}$, we have $\hat{\wzeroB}(b)_{i_1\cdots i_M} = 0$, 
hence
$$|(\wzeroB(b) - \hat{\wzeroB}(b))_{i_1\cdots i_M} | = |\wzeroB(b)_{i_1\cdots i_M} | $$
and the result directly follows from Proposition~\ref{prop:singv_and_multicontexts}.

Suppose $i_1,\cdots, i_M \in [\hat{n}]$,
we proceed by induction on $M$. If $M=1$ then $b=*$, thus
$$|\wzeroB(*)_i - \hat{\wzeroB}(*)_i| = |\wzero_i - \hat{\wzero}_i| = 0$$
for all $i \in [\hat{n}]$.

Suppose the result holds for multicontexts in $\Bs_{M-1}$ and let
$b' \in \Bs_{M}$. Let $b\in \Bs_{M-1}$ and $m \in [M-1]$ be  such that
$b' = b\llbracket m \rrbracket$. To lighten the notations we assume
without loss of generality that $m = 1$. We have
\begin{align}
|(\wzeroB(b') - \hat{\wzeroB}(b'))_{i_1\cdots i_{M}} | 
\label{eq:multicontext_1}
=&
|(\wzeroB(b\llbracket 1 \rrbracket) - \hat{\wzeroB}(b\llbracket 1 \rrbracket))_{i_1\cdots i_{M}} |\\
\leq&
\label{eq:multicontext_2.1}
\sum_{k=1}^{\hat{n}} \left| \T_{k i_{1} i_2} \right| 
| (\wzeroB(b)  -  \hat{\wzeroB}(b))_{k i_{3}\cdots i_{M}} | \\
+& 
\label{eq:multicontext_2.2}
\hspace{-0.2cm}\sum_{k=\hat{n}+1}^{n} \left| \T_{k i_{1} i_2} \right| \left| \wzeroB(b)_{k i_{3}\cdots i_{M}} \right| \\
\leq &
\label{eq:multicontext_3.1}
\sum_{k=1}^{\hat{n}} \frac{\sqrt{\singv_k}}{\sqrt{\singv_{i_1}\singv_{i_2}}} 
\cdot \frac{\singv_{\hat{n} + 1}\ n^{M-2} }{\sqrt{ \singv_k} \sqrt{\singv_{i_3}} \cdots \sqrt{\singv_{i_M}}} \\
+ & 
\label{eq:multicontext_3.2}
\hspace{-0.2cm}\sum_{k=\hat{n}+1}^{n} \frac{\sqrt{\singv_k}}{\sqrt{\singv_{i_1}\singv_{i_2}}} 
\cdot \frac{\min\{\singv_k, \singv_{i_3}, \cdots, \singv_{i_M}\}\ n^{M-2} }
				 {\sqrt{ \singv_k} \sqrt{\singv_{i_3}} \cdots \sqrt{\singv_{i_M}}} \\
\leq &
\ \singv_{\hat{n}+1}\ n^{M-1}  \prod_{m=1}^{M} \frac{1}{\sqrt{\singv_{i_m}}} \enspace .
\end{align}
To decompose~(\ref{eq:multicontext_1}) in~(\ref{eq:multicontext_2.1}) and~(\ref{eq:multicontext_2.2})
we used the fact that $\T_{k i_1 i_2} = \hat{\T}_{k i_1 i_2}$ whenever $k \leq \hat{n}$ and
$\hat{\wzeroB}(b)_{k i_3\cdots i_M} = 0$ whenever $k > \hat{n}$. We bounded~(\ref{eq:multicontext_2.1}) 
by~(\ref{eq:multicontext_3.1}) using the induction hypothesis, while we used Proposition~\ref{prop:singv_and_multicontexts}
to bound~(\ref{eq:multicontext_2.2}) by~(\ref{eq:multicontext_3.2}).  
\end{proof}

\begin{proposition}\label{prop:bound_tree}
Let $t\in \Ts$ be a tree of size $M$, then 
\begin{align*}
|f(t) - \hat{f}(t)| 
&\leq 
n^{2M-1} \singv_{\hat{n}+1} 
\enspace .
\end{align*}
\end{proposition}
\begin{proof}
Let $t \in \Ts$ be a tree of size $M-1$, then there exists a (unique) $b \in \Bs_{M}$ 
and a (unique) word $w = w_1 \cdots w_{M} \in \Sigma^*$ 
such that $t = b[w_1,\cdots, w_{M} ]$. Since $\winf_\sigma = \hat{\winf}_\sigma$ for
all $\sigma \in \Sigma$, we have $\winfB(x) = \hat{\winfB}(x)$ for all $x\in \Sigma^*$. 
Furthermore, since $\winf_\sigma(i)^2 \leq \singv_i$ for all $i\in [n]$, we have
$$|\winfB(w)_{i_1\cdots i_M}| \leq \prod_{m=1}^M \sqrt{\singv_{i_m}}\enspace.$$  
It follows that 
\begin{align*}
|f(t) - \hat{f}(t)| 
= & 
\left|\langle \wzeroB(b),\winfB(w) \rangle - \langle \hat{\wzeroB}(b),\hat{\winfB}(w) \rangle \right|\\
= &
\left|\langle \wzeroB(b) - \hat{\wzeroB}(b),\winfB(w) \rangle \right|\\
\leq &
\sum_{i_1=1}^n \cdots \sum_{i_M=1}^n |(\wzeroB(b) - \hat{\wzeroB}(b))_{i_1\cdots i_M} |\ | \winfB(w)_{i_1\cdots i_M} |\\
\leq &
\sum_{i_1=1}^n \cdots \sum_{i_M=1}^n \singv_{\hat{n} + 1}\ n^{M-1}  \prod_{m=1}^M \frac{1}{\sqrt{\singv_{i_m}}}
\cdot \prod_{m=1}^M \sqrt{\singv_{i_m}} \\
= &
n^{2M - 1} \singv_{\hat{n} +1}
\end{align*}
\end{proof}

\begin{proposition}\label{prop:bound_sumtree}
Let $S = |\Sigma|$ be the
size of the alphabet. For any integer $M$ we have
$$ \sum_{t\in\Ts:\atop\size{t} < M}  |f(t) - \hat{f}(t)| \leq  \frac{(4Sn^2)^{M+1}-1}{(4Sn^2)-1} \singv_{\hat{n}+1}\enspace.$$
\end{proposition}
\begin{proof}
For any integer $m$ there are less than $4^m$ binary trees with $m$ internal nodes (which is a bound
on the $m$-th Catalan number) and each one of these trees has $m+1$ leaves, thus $S^{m+1}$ possible
labelling of the leaves. Using the previous proposition we get
\begin{align*}
\sum_{t\in\Ts:\atop\size{t} < M}  |f(t) - \hat{f}(t)| 
 &=
 \sum_{m = 0}^{M-1}   \sum_{t\in\Ts:\atop\size{t} = m}  |f(t) - \hat{f}(t)| \\
&\leq
\sum_{m = 0}^{M-1} 4^m S^{m+1} \cdot  n^{2(m-1)}  \singv_{\hat{n}+1}\\
&\leq
\sum_{m = 1}^{M} (4Sn^2)^{m}  \singv_{\hat{n}+1} \\
&\leq
 \frac{(4Sn^2)^{M+1}-1}{(4Sn^2)-1}  \singv_{\hat{n}+1}.\qedhere
\end{align*}
\end{proof}

\begin{theorem*}
Let $A = \wtaSigma$ be a SVTA with $n$ states realizing a function $f$ and let 
$\singv_1 \geq \singv_2 \geq \cdots \geq \singv_n$ be the singular
values of the Hankel matrix $\mat{H}_f$. Let $\hat{f}$ be the function computed
by the SVTA truncation of $A$ to $\hat{n}$ states.

Let $S=|\Sigma|$ be the size of the alphabet, let $M$ be an integer and let $\varepsilon > 0$.

\begin{itemize}
\item  For any tree $t\in \Ts$ of size $M$,
 if ${\displaystyle M < \frac{\log\left(\frac{1}{\singv_{\hat{n}+1}}\right) + \log\left(\varepsilon\right) }{2\log n} }$
then $|f(t) - \hat{f}(t)| < \varepsilon$.
\item If ${\displaystyle M < \frac{\log\left(\frac{1}{\singv_{\hat{n}+1}}\right) + \log(\varepsilon)}{\log(4Sn^2)} } - 1$ then
$\displaystyle\sum_{t:\size{t}<M} |f(t)-\hat{f}(t)|< \varepsilon$.
\end{itemize}

\end{theorem*}
\begin{proof}
For the first bound, it is easy to check that if 
$$M < \frac{\log\left(\frac{1}{\singv_{\hat{n}+1}}\right) + \log\left(\varepsilon\right) }{2\log n}$$
then $ n^{2M} \singv_{\hat{n}+1} < \varepsilon$ and the result follows from 
Proposition~\ref{prop:bound_tree}. For the second one, if
$$ M < \frac{\log\left(\frac{1}{\singv_{\hat{n}+1}}\right) + \log(\varepsilon)}{\log(4Sn^2)} - 1$$ then
 $\frac{(4Sn^2)^{M+1}-1}{(4Sn^2)-1}\singv_{\hat{n}+1} < \varepsilon$ and the
 result follows from Proposition~\ref{prop:bound_sumtree}.
\end{proof}

\newpage
\clearpage

\end{document}


%
%
%
%
%
%

\twocolumn[

\aistatstitle{Low-Rank Approximation of Weighted Tree Automata\\
\emph{(Supplementary Material)}}

\aistatsauthor{ Anonymous Author 1 \And Anonymous Author 2 \And Anonymous Author 3 }

\aistatsaddress{ Unknown Institution 1 \And Unknown Institution 2 \And Unknown Institution 3 } ]

\section{Proof of Theorem~\ref{thm:rankfact}}
\label{proof:thm:rankfact}
\begin{theorem*}%
Let $f : \Ts \to \R$ be rational. If $\H_f = \P \S$ is a rank factorization,
then there exists a minimal WTA $A$ computing $f$ such that $\P_A = \P$ and
$\S_A = \S$.
\end{theorem*}
\begin{proof}
Let $n = \rank(f)$. Let $B$ be an arbitrary minimal WTA computing $f$.
Suppose $B$ induces the rank factorization $\H_f = \P' \S'$.
%
Since the columns of both $\P$ and $\P'$ are basis for the column-span of
$\H_f$, there must exists a change of basis $\Q \in \R^{n \times n}$ between
$\P$ and $\P'$. That is, $\Q$ is an invertible matrix such that $\P' \Q = \P$.
%
Furthermore, since $\P' \S' = \H_f = \P \S = \P' \Q \S$ and $\P'$ has full
column rank, we must have $\S' = \Q \S$, or equivalently, $\Q^{-1} \S' = \S$.
%
Thus, we let $A = B^{\Q}$, which immediately verifies $f_A = f_B = f$.
%
It remains to be shown that $A$ induces the rank factorization $\H_f = \P \S$.
%
Note that when proving the equivalence $f_A = f_B$ we already showed $\winf_A(t)
= \Q^{-1} \winf_B(t)$, which means we have $\S_A = \Q^{-1} \S' = \S$.
%
To show $\P_A = \P' \Q$ we need to show that for any $c \in \Cs$ we have
$\wzero_{A}(c)^\top = \wzero_B(c)^\top \Q$. This will immediately follow if we
show that $\ctm_A(c) = \Q^{-1} \ctm_B(c) \Q$. If we proceed by induction on
$\drop{c}$, we see the case $c = \gap$ is immediate, and for $c = (c',t)$ we get
\begin{align*}
\ctm_A((c',t)) &= (\T(\Q^{-\top},\Q,\Q))(\I,\ctm_A(c'),\winf_A(t)) \\
&=
(\T(\Q^{-\top},\Q,\Q))(\I,\Q^{-1} \ctm_B(c') \Q, \Q^{-1} \winf_B(t)) \\
&=
\T(\Q^{-\top},\ctm_B(c') \Q,\winf_B(t)) \\
&=
\Q^{-1} \T(\I,\ctm_B(c'),\winf_B(t)) \Q \enspace.
\end{align*}
Applying the same argument mutatis mutandis for $c = (t,c')$ completes the
proof.
\end{proof}

%
%

\section{Proof of Theorem~\ref{thm:Hsvd}}
\label{proof:thm:Hsvd}
\begin{theorem*}
If $f : \Ts_\Sigma \to \R$ is rational and strongly convergent, then $\H_f$ admits a
singular value decomposition.
\end{theorem*}
\begin{proof}
The result will follow if we show that $\H_f$ is the matrix of a compact
operator on a Hilbert space \cite{conway1990course}. The main obstruction to
this approach is that the rows and columns of $\H_f$ are indexed by different
objects (trees vs.\ contexts). Thus, we will need to see $\H_f$ as an operator
on a larger space that contains both these objects.

Recall we have $\Ts_\Sigma \subset \Ts_{\Sigma'}$ and $\Cs_{\Sigma} \subset
\Ts_{\Sigma'}$.
%
Given two functions $g, g' : \Ts_{\Sigma'} \to \R$ we define their inner product
to be
%
$\left< g, g' \right> = \sum_{t' \in \Ts_{\Sigma'}} g(t') g'(t')$.
%
%
Let $\norm{g} = \sqrt{|\left< g, g \right>|}$ be the induced norm and let $\sT$
be the space of all functions $g : \Ts_{\Sigma'} \to \R$ such that $\norm{g} <
\infty$.
%
Note that $\sT$ with a Hilbert space, and that since $\Ts_{\Sigma'}$ is
countable, it actually is a separable Hilbert space isomorphic to $\ell^2$, the
spaces of infinite square summable sequences.
%
Given set $\X \subset \Ts_{\Sigma'}$ we define $\sT(\X) = \{ g \in \sT \;|\;
g(t') = 0, t' \in \Ts_{\Sigma'} \setminus \X \}$.

Now let $C_f : \sT \to \sT$ be the linear operator on $\sT$ given by
\begin{equation*}
(C_f g)(t') =
\begin{cases}
\sum_{t \in \Ts_\Sigma} f(t'[t]) g(t) & \text{if $t' \in \Cs_\Sigma$} \\
0 & \text{if $t' \notin \Cs_\Sigma$} \enspace.
\end{cases}
\end{equation*}
Now note that by construction we have $\sT(\Ts_\Sigma) \subseteq \Ker(C_f)$ and
$\Im(C_f) \subseteq \sT(\Cs_\Sigma)$.
%
Hence, a simple calculation shows that given the decompositions $C_f :
\sT(\Ts_\Sigma)^\bot \oplus \sT(\Ts_\Sigma) \to \sT(\Cs_\Sigma) \oplus
\sT(\Cs_\Sigma)^\bot$, the matrix of $C_f$ is
\begin{equation*}
\mat{C}_f =
\left[
\begin{array}{cc}
\H_f & \mat{0} \\
\mat{0} & \mat{0}
\end{array}
\right] \enspace.
\end{equation*}
Thus, if $C_f$ is a compact operator, then $\H_f$ admits an SVD.
%
Since $\H_f$ has finite rank, we only need to show that $C_f$ is a bounded
operator.

Given $c \in \Cs_\Sigma$ we define $f_c \in \sT(\Ts_\Sigma)$ given by $f_c(t) =
f(c[t])$ for $t \in \Ts_\Sigma$.
%
Now let $g \in \sT$ with $\norm{g} = 1$ and recall $C_f$ is bounded if
$\norm{C_f g} < \infty$ for every $g \in \sT$ with $\norm{g} = 1$.
%
Indeed, because $f$ is strongly convergent we have:
\begin{align*}
\norm{C_f g}^2
&=
\sum_{t' \in \Ts_{\Sigma'}} (C_f g)(t')^2\\
&=
\sum_{c \in \Cs_{\Sigma}} (C_f g)(c)^2\\
&=
\sum_{c \in \Cs_{\Sigma}} \left(\sum_{t \in \Ts_\Sigma} f(c[t]) g(t)\right)^2\\
&=
\sum_{c \in \Cs_{\Sigma}} \left< f_c, g \right>^2 \\
&\leq
\norm{g}^2 \sum_{c \in \Cs_{\Sigma}} \norm{f_c}^2\\
&=
\sum_{c \in \Cs_{\Sigma}} \sum_{t' \in \Ts_{\Sigma'}} f_c(t')^2\\
&=
\sum_{c \in \Cs_{\Sigma}} \sum_{t \in \Ts_{\Sigma}} f(c[t])^2\\
&=
\sum_{t \in \Ts_{\Sigma}} |t| f(t)^2 \\
&\leq
\sup_{t \in \Ts_{\Sigma}} |f(t)| \cdot \sum_{t \in \Ts_{\Sigma}} |t| |f(t)| <
\infty \enspace,
\end{align*}
where we used the Cauchy--Schwarz inequality, and the fact that $\sup_{t \in
\Ts_{\Sigma}} |f(t)|$ is bounded when $f$ is strongly convergent.
\end{proof}

%
%

\section{Proof of Theorem~\ref{thm:fpoint}}
\label{proof:thm:fpoint}
\begin{theorem*}%
Let $F:\R^{n^2}\to\R^{n^2}$ be the mapping defined by $F(\vec{v}) =
\T^\kron(\mat{I},\vec{v},\vec{v}) + \sum_{\sigma \in \Sigma}
\winf^\kron_\sigma$. Then the following hold:
\begin{itemize}
\item[(i)] $\vec{s}$ is a fixed-point of $F$; i.e.\ $F(\vec{s}) = \vec{s}$.
\item[(ii)] $\vec{0}$ is in the basin of attraction of $\vec{s}$; i.e.\
$\lim_{k \to \infty} F^k(\vec{0}) = \vec{s}$.
\item[(iii)] The iteration defined by $\vec{s}_0=\vec{0}$ and
$\vec{s}_{k+1}=F(\vec{s}_k)$ converges linearly to $\vec{s}$; i.e. there exists
$0<\rho<1$ such that $\norm{\vec{s}_k - \vec{s}}_2 \leq \bigo{\rho^k}$.
%
%
\end{itemize} 
\end{theorem*}
\begin{proof}
(i)
%
We have $\T^\kron(\mat{I}, \vec{s},\vec{s}) = \sum_{t,t'\in \Ts}
\T^\kron(\mat{I}, \winf^\kron(t), \winf^\kron(t')) = \sum_{t,t'\in\Ts}
\winf^\kron((t,t')) = \sum_{t \in \Ts^{\geq 1}} \winf^\kron(t)$ where $\Ts^{\geq
1}$ is the set of trees of depth at least one.
%
Hence $F(\vec{s}) = \sum_{t\in \Ts^{\geq 1}} \winf^\kron(t) +
\sum_{\sigma\in\Sigma} \winf^\kron_\sigma = \vec{s}$.

(ii) Let $\Ts^{\leq k}$ denote the set of all trees with depth at most $k$. We
prove by induction on $k$ that $F^k(\vec{0}) = \sum_{t\in \Ts^{\leq k}}
\winf^\kron(t)$, which implies that $\lim_{k\to \infty} F^k(\vec{0}) = \vec{s}$.
%
This is straightforward for $k=0$. Assuming it is true for all naturals up to
$k-1$, we have
%
\begin{align*}
F^k(\vec{0})
&=
\T^\kron(\mat{I}, F^{k-1}(\vec{0}), F^{k-1}(\vec{0})) + \sum_{\sigma\in\Sigma}
\winf^\kron_\sigma \\ 
&=
\sum_{t,t'\in \Ts^{\leq k-1}} \T^\kron(\mat{I}, \winf^\kron(t), \winf^\kron(t'))
+ \sum_{\sigma\in\Sigma} \winf^\kron_\sigma \\
&=
\sum_{t,t'\in \Ts^{\leq k-1}} \winf^\kron((t,t')) + \sum_{\sigma\in\Sigma}
\winf^\kron_\sigma\\
&=
\sum_{t\in \Ts^{\leq k}} \winf^\kron(t) \enspace.
\end{align*}

(iii) Let $\vec{E}$ be the Jacobian of $F$ around $\vec{s}$, we show that the
spectral radius $\rho(\mat{E})$ of $\mat{E}$ is less than one, which implies the
result by Ostrowski's theorem (see \cite[Theorem~8.1.7]{Ortega_1990}). 

Since $A$ is minimal, there exists trees $t_1,\cdots,t_n\in\Ts$ and contexts
$c_1,\cdots,c_n\in \Cs$ such that both $\{\winf(t_i)\}_{i \in [n]}$ and
$\{\wzero(c_i)\}_{i \in [n]}$ are sets of linear independent vectors in $\R^n$
\cite{bailly2010spectral}. Therefore, the sets
$\{\winf(t_i) \kron \winf(t_j)\}_{i,j \in [n]}$ and $\{\wzero(c_i) \kron
\wzero(c_j)\}_{i,j \in [n]}$ are sets of linear independent vectors in $\R^{n^2}$.
%
Let $\vec{v} \in \R^{n^2}$ be an eigenvector of $\mat{E}$ with eigenvalue
$\lambda \neq 0$, and let $\vec{v} = \sum_{i,j \in [n]} \beta_{i,j}
(\winf(t_i)\kron\winf(t_j))$ be its expression in terms of the basis given by
$\{\winf(t_i) \kron \winf(t_j)\}$.
%
For any vector $\vec{u} \in \{\wzero(c_i) \kron \wzero(c_j)\}$ we have
\begin{align*}
 \lim_{k \to \infty} \vec{u}^\top \mat{E}^k \vec{v} 
 \leq
\lim_{k \to \infty} |\vec{u}^\top \mat{E}^k \vec{v}| 
\leq
\sum_{i,j \in [n]} |\beta_{i,j}| \lim_{k \to \infty} |\vec{u}^\top
\mat{E}^k (\winf(t_i)\kron\winf(t_j))| = 0 \enspace,
\end{align*}
where we used Lemma~\ref{lemma:cctzero} in the last step.
%
Since this is true for any vector $\vec{u}$ in the basis $\{\wzero(c_i) \kron
\wzero(c_j)\}$, we have $\lim_{k\to\infty} \mat{E}^k\vec{v} = \lim_{k\to\infty}
|\lambda|^k \vec{v} = \vec{0}$, hence $|\lambda| < 1$. This reasoning holds for
any eigenvalue of $\mat{E}$, hence $\rho(\mat{E}) < 1$.
\end{proof}

\begin{lemma}
\label{lemma:cctzero}
Let $A=\wta$ be a minimal  WTA of dimension $n$ computing the strongly
convergent function $f$, and let $\vec{E}\in\R^{n^2\times n^2} $ be the Jacobian
around $\vec{s} = \sum_{t\in\Ts}\winf(t)\kron \winf(t)$ of the mapping
$F:\vec{v}\to  \T^\kron(\mat{I},\vec{v},\vec{v}) + \sum_{\sigma \in \Sigma}
\winf^\kron_\sigma$.
%
Then for any $c_1,c_2\in \Cs$ and any $t_1,t_2 \in \Ts$ we have $\lim_{k\to
\infty} |(\wzero(c_1)\kron \wzero(c_2))^\top \mat{E}^k
(\winf(t_1)\kron\winf(t_2))| = 0$.
\end{lemma}
\begin{proof}
Let $\ctm^\kron : \Cs \to \R^{n^2 \times n^2}$ be the context mapping
associated with the WTA $A^\kron$; i.e.\ $\ctm^\kron = \ctm_{A^\kron}$.
%
We start by proving by induction on $\drop{c}$ that $\ctm^\kron(c) =
\ctm(c)\kron\ctm(c)$ for all $c\in \Cs$. 
%
Let $\Cs^d$ denote the set of contexts $c \in \Cs$ with $\drop{c} = d$.
%
The statement is trivial for $c \in \Cs^0$. Assume the statement is true for all
naturals up to $d-1$ and let $c = (t,c') \in \Cs^d$ for some $t\in\Ts$ and
$c'\in\Cs^{d-1}$. Then using our inductive hypothesis we have that

%
\begin{align*}
\ctm^\kron(c) &=\T^\kron(\I_{n^2}, \winf(t)\kron \winf(t), \ctm(c')\kron\ctm(c'))\\
&=
\T(\I_n,\winf(t),\ctm(c'))\kron \T(\I_n,\winf(t),\ctm(c'))\\
&=
\ctm(c)\kron\ctm(c) \enspace.
\end{align*}
%
The case $c = (c',t)$ follows from an identical argument.

Next we use the multi-linearity of $F$ to expand $F(\vec{s} + \vec{h})$ for a
vector $\vec{h}\in \R^{n^2}$. Keeping the terms that are linear in $\vec{h}$ we
obtain that $\mat{E} = \T^\kron(\mat{I},\vec{s},\mat{I}) +
\T^\kron(\mat{I},\mat{I},\vec{s})$. 
%
It follows that $\mat{E} = \sum_{c\in \Cs^1} \ctm^\kron(c)$, and it can be shown
by induction on $k$ that $\mat{E}^k = \sum_{c\in \Cs^k} \ctm^\kron(c)$.

Writing $d_c = \min(\drop{c_1},\drop{c_2})$ and $d_t=\min(\depth{t_1},\depth{t_2})$,
we can see that
%
\begin{align*}
 \left|(\wzero(c_1)\kron \wzero(c_2))^\top \mat{E}^k
(\winf(t_1)\kron\winf(t_2))\right|
&=
\left| \sum_{c\in \Cs^k} (\wzero(c_1)\kron \wzero(c_2))^\top
\ctm^\kron(c) (\winf(t_1)\kron\winf(t_2))\right| \\
& =
\left| \sum_{c\in \Cs^k} (\wzero(c_1)^\top \ctm(c) \winf(t_1))\cdot
(\wzero(c_2)^\top \ctm(c) \winf(t_2)) \right|\\
&=
\left| \sum_{c\in \Cs^k} f(c_1[c[t_1]]) f(c_2[c[t_2]]) \right|\\
& \leq
\left(\sum_{c\in \Cs^k} |f(c_1[c[t_1]])|\right) \left(\sum_{c\in \Cs^k}
|f(c_2[c[t_2]])|\right)\\
&\leq
\left(\sum_{t\in \TF^{\geq d_c + d_t + k}} |t| |f(t)| \right)^2 \enspace,
\end{align*}
%
which tends to $0$ with $k \to \infty$ since $f$ is strongly convergent.
%
To prove the last inequality, check that any tree of the form $t' = c[c'[t]]$
satisfies $\depth{t'} \geq \drop{c} + \drop{c'} + \depth{t}$, and that for fixed
$c \in \Cs$ and $t, t' \in \Ts$ we have $|\{ c' \in \CF : c[c'[t]] = t'\}| \leq
|t'|$ (indeed, a factorization $t' = c[c'[t]]$ is fixed once the root of $t$ is
chosen in $t'$, which can be done in at most $|t'|$ different ways).
\end{proof}

%
%

\section{Proof of Theorem~\ref{gram_cvg}}
\label{proof:gram_cvg}
\begin{theorem*}%
There exists $0<\rho<1$ such that after $k$ iterations in
Algorithm~2, the approximations  $\gramcapprox$ and $\gramtapprox$ 
satisfy $\norm{\gramc- \gramcapprox}_F \leq \mathcal{O}(\rho^k)$ and  
$\norm{\gramt- \gramtapprox}_F \leq \mathcal{O}(\rho^k)$.
\end{theorem*}
\begin{proof}
The result for the Gram matrix $\gramt$ directly follows from Theorem~\ref{thm:fpoint}.
We now show how the error in the approximation of 
$\gramt = \reshape(\vec{s},n\times n)$ affects the approximation 
of $\vec{q} = (\wzero^\kron)^\top (\I - \mat{E})^{-1} = \vectorize(\gramc)$.
Let $\hat{\vec{s}}\in\R^n$ be such that 
$\norm{\vec{s} - \hat{\vec{s}}}\leq \varepsilon$,
let  $\hat{\mat{E}} = \T^\kron(\I,\hat{\vec{s}},\I)
+ \T^\kron(\I,\I,\hat{\vec{s}})$
and let $\vec{q} = (\wzero^\kron)^\top (\I - \hat{\mat{E}})^{-1}$. 
We first bound the distance between $\mat{E}$ and $\hat{\mat{E}}$. We have
%
%
%
%
%
%
%
%
%
%
%
%
%
\begin{align*}
\norm{\mat{E} - \hat{\mat{E}}}_F 
&=
\norm{ \T^\kron(\I,\vec{s} - \hat{\vec{s}},\I)
+ \T^\kron(\I,\I,\vec{s} - \hat{\vec{s}})}_F\\
&\leq 
2 \norm{\T^\kron}_F \norm{\vec{s} - \hat{\vec{s}}} \\
&=
\bigo{\varepsilon}
\enspace,
\end{align*}
where we used the bounds $\norm{\T(\I,\I,\v)}_F \leq \norm{\T}_F \norm{\v}$ and
$\norm{\T(\I,\v,\I)}_F \leq \norm{\T}_F \norm{\v}$.

Let $\delta = \norm{\mat{E} - \hat{\mat{E}}}$ and let $\sigma$ be the smallest 
nonzero eigenvalue of the matrix $\I - \mat{E}$. It follows from
\cite[Equation (7.2)]{invperturbation} that if $\delta < \sigma$ then
$\norm{ (\I - \mat{E})^{-1} - (\I - \hat{\mat{E}})^{-1} } \leq
\delta/(\sigma ( \sigma - \delta ))$.
%
Since $\delta = \mathcal{O}(\varepsilon)$ from our previous bound, the condition
$\delta \leq \sigma/2$ will be eventually satisfied as $\varepsilon \to 0$, in which
case we can conclude that
\begin{align*}
\norm{\gramc - \gramcapprox}_F
&=
\norm{ \vec{q} -\hat{\vec{q}} } \\
&\leq
\norm{ (\I - \mat{E})^{-1} - (\I - \hat{\mat{E}})^{-1} }  \norm{\wzero^\kron} \\
&\leq 
\frac{2 \delta}{\sigma^2} \norm{\wzero^\kron} \\
& = 
\mathcal{O}(\varepsilon)
\enspace.
\end{align*}
\end{proof}

%
%

%
\section{Proof of Theorem~\ref{prop:singv_and_SVTA_params}}
\label{proof:prop:singv_and_SVTA_params}

Let $A = \wtaSigma$ be a SVTA with $n$ states realizing a function $f$ and let 
$\singv_1 \geq \singv_2 \geq \cdots \geq \singv_n$ be the singular
values of the Hankel matrix $\mat{H}_f$.

%
%

%
%
%
%
%
%
%

Theorem~\ref{prop:singv_and_SVTA_params} relies on the following lemma, which explores the consequences that the fixed-point equations used to compute $\gramt$ and $\gramc$ have for an SVTA.

\begin{lemma}
\label{lemma:SVTA_singv_params}
For all $i\in [n]$, the following hold:

\begin{enumerate}
\item $\singv_i = \sum_{\sigma\in\Sigma} \winf_\sigma(i)^2+\sum_{j,k=1}^n \T(i,j,k)^2\singv_j\singv_k \enspace,$
\item $\singv_i = \wzero(j)^2 + \sum_{j,k=1}^n (\T(j,i,k)^2 + \T(j,k,i)^2)\singv_j\singv_k \enspace .$
\end{enumerate}

\end{lemma}

\begin{proof}
Let $\gramt$ and $\gramc$ be the Gram matrices associated with the rank
factorization of $\mat{H}_f$. Since $A$ is a SVTA we have $\gramt = \gramc = \mat{D}$
where $\mat{D} = \mathop{diag}(\singv_1,\cdots,\singv_n)$ is a diagonal matrix with the Hankel singular
values on the diagonal. The first equality then follows from the following fixed point 
characterization of $\gramt$:
{%
\begin{align*}
\gramt &=
\sum_{t\in \Ts} \winf(t)\winf(t)^\top \\
&=
\sum_{\sigma\in\Sigma} \winf_\sigma\winf_\sigma^\top \\
&+ 
\sum_{t_1,t_2\in\Ts} \T(\I,\winf(t_1),\winf(t_2)) \T(\I,\winf(t_1),\winf(t_2))^\top\\
&= 
\sum_{\sigma\in\Sigma} \winf_\sigma\winf_\sigma^\top + 
\mat{T}_{(1)} (\gramt\kron\gramt) \mat{T}_{(1)}^\top \enspace,
\end{align*}
}%
(where $\mat{T}_(i)$ denotes the matricization of the tensor $\T$ along the $i$th mode). 
The second equality follows from the following fixed point 
characterization of $\gramc$:
{%
\begin{align*}
\gramc
&=
\sum_{c\in \Cs} \wzero(c)\wzero(c)^\top \\
&=
\wzero\wzero^\top \\
&+ 
\sum_{c\in \Cs, t\in \Ts} \T(\wzero(c),\winf(t),\I) \T(\wzero(c),\winf(t),\I)^\top \\
&+
\sum_{c\in \Cs, t\in \Ts} \T(\wzero(c),\I,\winf(t)) \T(\wzero(c),\I,\winf(t))^\top \\
&=
\wzero\wzero^\top \\
&+
\mat{T}_{(2)} (\gramc\otimes \gramt)\mat{T}_{(2)}^\top
\\
&+
\mat{T}_{(3)} (\gramc\otimes \gramt)\mat{T}_{(3)}^\top \enspace .
\end{align*}
}%
\end{proof}

\begin{theorem*}
For any $t\in \Ts$, $c\in \Cs$ and $i,j,k\in [n]$ the following hold:
 \begin{itemize}
 \item $|\winf(t)_i| \leq \sqrt{\singv_i}$ ,
 \item $|\wzero(c)_i| \leq \sqrt{\singv_i}$ , and
 \item $|\T(i,j,k)| \leq \min\{\frac{\sqrt{\singv_i}}{\sqrt{\singv_j}\sqrt{\singv_k}},
\frac{\sqrt{\singv_j}}{\sqrt{\singv_i}\sqrt{\singv_k}},\frac{\sqrt{\singv_k}}{\sqrt{\singv_i}\sqrt{\singv_j}}\}$.
 \end{itemize}
\end{theorem*}
\begin{proof}
The third point is a direct consequence of the previous Lemma. For the first point,
let $\mat{UDV}^\top$ be the SVD of $\mat{H}_f$.
Since $A$ is a SVTA we have 
$$\winf(t)_i^2 = (\mat{D}^{1/2}\mat{V}^\top)_{i,t}^2 = \singv_i \mat{V}(t,i)^2$$
and since the rows of $\mat{V}$ are orthonormal we have $\mat{V}(t,i)^2\leq 1$.

The inequality for contexts is proved similarly by reasoning on the rows of $\mat{UD}^{1/2}$.
\end{proof}

%
%

\section{Proof of Theorem~\ref{thm:svtabound}}\label{proof:thm:svtabound}

%
%
%
%
%
%
%
%
%
%
%
To prove Theorem~\ref{thm:svtabound}, we will show how the computation of a
WTA on a give tree $t$ can be seen as an inner product between two tensors, one
which is a function of the topology of the tree, and one which is a function of the
labeling of its leafs (Proposition~\ref{prop:innerproduct_topo_leafs}). We will then 
show a fundamental relation between the components of the first tensor and the 
singular values of the Hankel matrix when the WTA is in SVTA normal form 
(Proposition~\ref{prop:singv_and_multicontexts}); this proposition will allow us
to show Lemma~\ref{lemma:multicontext_SVTA_truncation_bound} that bounds
the difference between components of this first tensor for the original SVTA and
its truncation. We will finally use this lemma to bound the absolute error introduced
by the truncation of an SVTA (Propositions~\ref{prop:bound_tree} and~\ref{prop:bound_sumtree}).

We first introduce another kind of contexts than the one introduced in Section~\ref{sec:approxmin},
where every leaf of a binary tree is labeled by the special symbol $*$ (which still acts
as a place holder). 
Let $\Bs$ be the set of binary trees on the one-letter alphabet $\{*\}$. 
We will call a tree $b\in \Bs$ a \emph{multicontext}. 
For any integer $M\geq 1$ we let 
$$\Bs_M =\{ b\in \Bs : |\yield{b}| = M\}$$
be the subset of multicontexts with $M$ leaves (equivalently, $\Bs_M$ is the subset
of multicontexts of size $M-1$).
%
Given a word $w = w_1\cdots w_M \in \Sigma^*$ and a multicontext $b\in \Bs_{M}$, 
we denote by $b[w_1,\cdots,w_M] \in \Ts_\Sigma$ the tree obtained by replacing the
$i$th occurrence of  $*$ in $b$ by $w_i$ for $i\in [M]$.
%
Let $b\in \Bs_M$, for any integer $m \in [M]$ we denote by 
$b\llbracket m \rrbracket\in \Bs_{M+1}$ the multicontext obtained
by replacing the $m$th occurence of $*$ in $b$ by the tree $(*,*)$.
Let $M>1$, it is easy to check that for  any $b'\in \Bs_M$, there
exist $b \in \Bs_{M-1}$ and $m \in [M-1]$ satisfying $b' = b\llbracket m \rrbracket$. 
See Figure~\ref{fig:multicontexts} for some illustrative examples.

\begin{figure}
\begin{center}
\includegraphics[scale=1]{./figures/multicontexts.pdf}
\end{center}
\caption{A multicontext $b = ((*,*),*) \in\Bs_3$, the tree $b[e,g,f] \in \Ts$ and the
multicontext $b\llbracket 2 \rrbracket \in \Bs_4$. }
\label{fig:multicontexts}
\end{figure}

We now show how the computation of a WTA on a given tree with $M$ leaves can be
seen as an inner product between two $M$th order tensors: the first one depends
on the topology of the tree, while the second one depends on the labeling of its
leaves. Let $A = \wtaSigma$ be a WTA with $n$ states computing a function $f$. 
Given a multicontext 
$b\in \Bs_M$, we denote by $\wzeroB_A(b) \in \bigotimes_{i=1}^M \R^n$ the
$M$th order tensor inductively defined by $\wzeroB_A(*) = \wzero$ and
{
\begin{align*}
 \wzeroB_A(b\llbracket m \rrbracket)_{i_1 \cdots i_{M}} = 
\sum_{k=1}^n \wzeroB_A(b)_{i_1\cdots i_{m-1} k i_{m+2} \cdots i_{M}}
\T_{k i_{m} i_{m+1}}
\end{align*}
}%
for any $b\in \Bs_{M-1}$, $m\in [M-1]$ and $i_1, \cdots, i_M\in [n]$ 
(i.e. $\wzeroB_A(b\llbracket m \rrbracket)$ is the 
contraction of $\wzeroB_A(b)$ along the $m$th mode and $\T$ along the
first mode).
Given a word $w = w_1\cdots w_M \in \Sigma^*$, we let
$\winfB_A(w) \in \bigotimes_{i=1}^M \R^n$ be the $M$th order tensor defined
by 
{
$$\winfB_A(w)_{i_1\cdots i_M} = \winf(w_1)_{i_1} \winf(w_2)_{i_2}\cdots\winf(w_M)_{i_M}
= \prod_{m=1}^M \winf(w_m)_{i_m}$$
}%
for $i_1,\cdots, i_M \in [n]$ (i.e. $\winfB_A(w)$ is the tensor product of the $\winf(w_i)$'s). 
We will simply write $\wzeroB$ and $\winfB$ when the automaton is clear from context.

\begin{proposition}\label{prop:innerproduct_topo_leafs}
For any multicontext $b\in \Bs_M$ and 
any word $w = w_1\cdots w_M \in \Sigma^* $ we have
$$f(b[w_1,\cdots,w_M]) = \langle \wzeroB(b), \winfB(w) \rangle\enspace,$$
where the inner product between two $M$th order tensors $\ten{U}$ and $\ten{V}$
is defined by $\langle \ten{U},\ten{V}\rangle = \sum_{i_1\cdots i_M} \ten{U}(i_1,\cdots,i_M)\ten{V}(i_1,\cdots,i_M)$.
\end{proposition}
\begin{proof}[Sketch of proof]
Let $b\in \Bs_M$ and  $w = w_1\cdots w_M \in \Sigma^* $. 
Let $b_1 \in \Bs_{M-1}$ and $m\in [M-1]$ be such that
$b = b_1\llbracket m \rrbracket$.
In order to  lighten the notations and without loss of generality we assume that $m =1$.
One can check that
\begin{align*}
 \langle \wzeroB(b), \winfB(w) \rangle  =& \wzeroB(b)(\winf_{w_1}, \cdots, \winf_{w_M})\\
 =& 
 \wzeroB(b_1\llbracket 1 \rrbracket )(\winf_{w_1}, \cdots, \winf_{w_M})\\
 =&
  \wzeroB(b_1)(\winf((w_1,w_2)), \winf_{w_3}, \cdots, \winf_{w_M})\enspace .
\end{align*} 
The same reasoning can now be applied to $b_1$. Assume for example that
$b_1 = b_2 \llbracket 1 \rrbracket$ for some $b_2 \in \Bs_{M-2}$, 
we would have
\begin{align*}
 \langle \wzeroB(b), \winfB(w) \rangle  
 &=
 \wzeroB(b_1)(\winf((w_1,w_2)), \winf_{w_3}, \cdots, \winf_{w_M})\\
 &=
 \wzeroB(b_2\llbracket 1\rrbracket )(\winf((w_1,w_2)), \winf_{w_3}, \cdots, \winf_{w_M}) \\
 &=
 \wzeroB(b_2)(\winf(((w_1,w_2),w_3)), \winf_{w_4}, \cdots, \winf_{w_M})\enspace .
\end{align*}
By applying the same argument again and again we will eventually obtain
\begin{align*}
\langle \wzeroB(b), \winfB(w) \rangle  &=  \wzeroB(b_{M-1})(\winf(b[w_1,\cdots,w_M])) \\
&=  \wzeroB(*)(\winf(b[w_1,\cdots,w_M]))\\
&= \wzero^\top \winf(b[w_1,\cdots,w_M])\\
&= f(b[w_1,\cdots,w_M])\enspace. \qedhere
\end{align*}

%
%
%
%
%
%
%
%
%
%
%
%
%
%
%
%
%
%
%
%
%
%
%
%
%
%
%
%
%
%
%
%
%
%
%
%
%
%
%
%

%
%
%
%
%
%
%
\end{proof}

Suppose now that $A = \wtaSigma$ is an SVTA with $n$ states for $f$ and let 
$\singv_1 \geq \singv_2 \geq \cdots \geq \singv_n$ be the singular
values of the Hankel matrix $\mat{H}_f$. 
The following proposition shows a relation --- similar to the one presented in 
Theorem~\ref{prop:singv_and_SVTA_params} --- between the components of
the tensor $\wzeroB(b)$ (for any multicontext $b$) and the singular values of the Hankel matrix.

\begin{proposition}\label{prop:singv_and_multicontexts}
If $A = \wtaSigma$ is an SVTA, then for any $b\in \Bs_M$ and 
any $i_1,\cdots,i_M \in [n]$ the following holds:
$$|\wzeroB(b)_{i_1\cdots i_M}| \leq n^{M-1} \min_{p\in [M]} \{\singv_{i_p} \}
 \prod_{m=1}^M \frac{1}{\sqrt{\singv_{i_m}}} \enspace.$$
 
\end{proposition} 
\begin{proof}
We proceed by induction on $M$. If $M=1$ we have $b=*$ and
$$|\wzeroB(*)_i| = |\wzero_i| \leq \sqrt{\singv_i} = \frac{\singv_i}{\sqrt{\singv_i}}\enspace.$$
Suppose the result holds for multicontexts in $\Bs_{M-1}$ and let
$b'\in \Bs_M$. Let $m \in [M]$ and $b\in \Bs_{M-1}$ be such that 
$b' = b\llbracket m \rrbracket$. Without loss of generality and to lighten
the notations we assume that $m = 1$. Start by writing:
\begin{align*}
|\wzeroB(b')_{i_1\cdots i_M}| = |\wzeroB(b\llbracket 1 \rrbracket)_{i_1 \cdots i_M}| 
&=
\left| \sum_{k=1}^n \wzeroB_A(b)_{ k i_{3} \cdots i_{M}}
\T_{k i_1 i_2} \right| 
\leq
\sum_{k=1}^n \left| \wzeroB_A(b)_{ k i_{3} \cdots i_{M}}
\T_{k i_1 i_2} \right|
\end{align*}
Remarking that the third inequality in Theorem~\ref{prop:singv_and_SVTA_params}
can be rewritten as $|\T_{ijk}| \leq \frac{\min\{\singv_i, \singv_j,
\singv_k\}}{\sqrt{\singv_i}\sqrt{\singv_j}\sqrt{\singv_k}}$, we have for any $k
\in [n]$:
\begin{align*}
\left| \wzeroB_A(b)_{ k i_{3} \cdots i_{M}} \T_{k i_1 i_2} \right| 
\leq&
n^{M-2} \min\{\singv_k,\singv_{i_3},\cdots, \singv_{i_M}\} \frac{1}{\sqrt{\singv_k}} 
\prod_{m=3}^M \frac{1}{\sqrt{\singv_{i_m}}} 
\frac{\min\{\singv_k, \singv_{i_1}, \singv_{i_2}\}}{\sqrt{\singv_k}\sqrt{\singv_{i_1}}\sqrt{\singv_{i_2}}} \\
=&
n^{M-2} \frac{1}{\singv_k}  
\prod_{m=1}^M \frac{1}{\sqrt{\singv_{i_m}}} 
\min\{\singv_k,\singv_{i_3},\cdots, \singv_{i_M}\} \min\{\singv_k, \singv_{i_1}, \singv_{i_2}\}  \\
\leq &
n^{M-2} \min_{p\in [M]} \{\singv_{i_p} \}
 \prod_{m=1}^M \frac{1}{\sqrt{\singv_{i_m}}} \enspace,
\end{align*}
where we used that
$$\min\{\singv_k,\singv_{i_3},\cdots, \singv_{i_M}\} \min\{\singv_k, \singv_{i_1}, \singv_{i_2}\} \leq \singv_k \min\{\singv_{i_1},\cdots, \singv_{i_M}\}$$
Summing over $k$ yields the desired bound.
\qedhere
\end{proof}

Let $\hat{f}$ be the function computed
by the SVTA truncation of $A$ to $\hat{n}$ states. 
Let $\mat{\Pi} \in \R^{n\times n}$ be the diagonal matrix defined by
$\mat{\Pi}(i,i) = 1$ if $i \leq \hat{n}$ and $0$ otherwise. It is easy to 
check that the WTA 
$\hat{A} = \langle\hat{\wzero},\hat{\T},\hat{\winf}_\sigma\rangle$,
where $ \hat{\wzero} =  \mat{\Pi}\wzero$, $\hat{\T}=\T(\I, \mat{\Pi}, \mat{\Pi})$ and
$\hat{\winf}_\sigma =\winf_\sigma$,
computes the function $\hat{f}$. We let $\hat{\winf}(t) = \winf_{\hat{A}}(t)$
for any tree $t$ and similarly for $\hat{\wzero}(c)$, $\hat{\winfB}(w)$ and $\hat{\wzeroB}(c)$.  

We can now prove the following Lemma that bounds the absolute difference
between the components of the tensors $\wzeroB(b)$ and $\hat{\wzeroB}(b)$ for
a given multicontext $b$.

\begin{lemma}\label{lemma:multicontext_SVTA_truncation_bound}
For any $b\in \Bs_M$ and any $i_1,\cdots i_M \in [n]$ we have 
$$|(\wzeroB(b) - \hat{\wzeroB}(b))_{i_1\cdots i_M} | 
\leq \singv_{\hat{n} + 1}\ n^{M-1}  \prod_{m=1}^M \frac{1}{\sqrt{\singv_{i_m}}} \enspace .$$
\end{lemma} 
\begin{proof}
It is easy to check that when there exists at least one $m\in [M]$ such that
$i_m > \hat{n}$, we have $\hat{\wzeroB}(b)_{i_1\cdots i_M} = 0$, 
hence
$$|(\wzeroB(b) - \hat{\wzeroB}(b))_{i_1\cdots i_M} | = |\wzeroB(b)_{i_1\cdots i_M} | $$
and the result directly follows from Proposition~\ref{prop:singv_and_multicontexts}.

Suppose $i_1,\cdots, i_M \in [\hat{n}]$,
we proceed by induction on $M$. If $M=1$ then $b=*$, thus
$$|\wzeroB(*)_i - \hat{\wzeroB}(*)_i| = |\wzero_i - \hat{\wzero}_i| = 0$$
for all $i \in [\hat{n}]$.

Suppose the result holds for multicontexts in $\Bs_{M-1}$ and let
$b' \in \Bs_{M}$. Let $b\in \Bs_{M-1}$ and $m \in [M-1]$ be  such that
$b' = b\llbracket m \rrbracket$. To lighten the notations we assume
without loss of generality that $m = 1$. We have
\begin{align}
|(\wzeroB(b') - \hat{\wzeroB}(b'))_{i_1\cdots i_{M}} | 
\label{eq:multicontext_1}
=&
|(\wzeroB(b\llbracket 1 \rrbracket) - \hat{\wzeroB}(b\llbracket 1 \rrbracket))_{i_1\cdots i_{M}} |\\
\leq&
\label{eq:multicontext_2.1}
\sum_{k=1}^{\hat{n}} \left| \T_{k i_{1} i_2} \right| 
| (\wzeroB(b)  -  \hat{\wzeroB}(b))_{k i_{3}\cdots i_{M}} | \\
+& 
\label{eq:multicontext_2.2}
\hspace{-0.2cm}\sum_{k=\hat{n}+1}^{n} \left| \T_{k i_{1} i_2} \right| \left| \wzeroB(b)_{k i_{3}\cdots i_{M}} \right| \\
\leq &
\label{eq:multicontext_3.1}
\sum_{k=1}^{\hat{n}} \frac{\sqrt{\singv_k}}{\sqrt{\singv_{i_1}\singv_{i_2}}} 
\cdot \frac{\singv_{\hat{n} + 1}\ n^{M-2} }{\sqrt{ \singv_k} \sqrt{\singv_{i_3}} \cdots \sqrt{\singv_{i_M}}} \\
+ & 
\label{eq:multicontext_3.2}
\hspace{-0.2cm}\sum_{k=\hat{n}+1}^{n} \frac{\sqrt{\singv_k}}{\sqrt{\singv_{i_1}\singv_{i_2}}} 
\cdot \frac{\min\{\singv_k, \singv_{i_3}, \cdots, \singv_{i_M}\}\ n^{M-2} }
				 {\sqrt{ \singv_k} \sqrt{\singv_{i_3}} \cdots \sqrt{\singv_{i_M}}} \\
\leq &
\ \singv_{\hat{n}+1}\ n^{M-1}  \prod_{m=1}^{M} \frac{1}{\sqrt{\singv_{i_m}}} \enspace .
\end{align}
To decompose~(\ref{eq:multicontext_1}) in~(\ref{eq:multicontext_2.1}) and~(\ref{eq:multicontext_2.2})
we used the fact that $\T_{k i_1 i_2} = \hat{\T}_{k i_1 i_2}$ whenever $k \leq \hat{n}$ and
$\hat{\wzeroB}(b)_{k i_3\cdots i_M} = 0$ whenever $k > \hat{n}$. We bounded~(\ref{eq:multicontext_2.1}) 
by~(\ref{eq:multicontext_3.1}) using the induction hypothesis, while we used Proposition~\ref{prop:singv_and_multicontexts}
to bound~(\ref{eq:multicontext_2.2}) by~(\ref{eq:multicontext_3.2}).  
\end{proof}

\begin{proposition}\label{prop:bound_tree}
Let $t\in \Ts$ be a tree of size $M$, then 
\begin{align*}
|f(t) - \hat{f}(t)| 
&\leq 
n^{2M-1} \singv_{\hat{n}+1} 
\enspace .
\end{align*}
\end{proposition}
\begin{proof}
Let $t \in \Ts$ be a tree of size $M-1$, then there exists a (unique) $b \in \Bs_{M}$ 
and a (unique) word $w = w_1 \cdots w_{M} \in \Sigma^*$ 
such that $t = b[w_1,\cdots, w_{M} ]$. Since $\winf_\sigma = \hat{\winf}_\sigma$ for
all $\sigma \in \Sigma$, we have $\winfB(x) = \hat{\winfB}(x)$ for all $x\in \Sigma^*$. 
Furthermore, since $\winf_\sigma(i)^2 \leq \singv_i$ for all $i\in [n]$, we have
$$|\winfB(w)_{i_1\cdots i_M}| \leq \prod_{m=1}^M \sqrt{\singv_{i_m}}\enspace.$$  
It follows that 
\begin{align*}
|f(t) - \hat{f}(t)| 
= & 
\left|\langle \wzeroB(b),\winfB(w) \rangle - \langle \hat{\wzeroB}(b),\hat{\winfB}(w) \rangle \right|\\
= &
\left|\langle \wzeroB(b) - \hat{\wzeroB}(b),\winfB(w) \rangle \right|\\
\leq &
\sum_{i_1=1}^n \cdots \sum_{i_M=1}^n |(\wzeroB(b) - \hat{\wzeroB}(b))_{i_1\cdots i_M} |\ | \winfB(w)_{i_1\cdots i_M} |\\
\leq &
\sum_{i_1=1}^n \cdots \sum_{i_M=1}^n \singv_{\hat{n} + 1}\ n^{M-1}  \prod_{m=1}^M \frac{1}{\sqrt{\singv_{i_m}}}
\cdot \prod_{m=1}^M \sqrt{\singv_{i_m}} \\
= &
n^{2M - 1} \singv_{\hat{n} +1}
\end{align*}
\end{proof}

%
%
%

\begin{proposition}\label{prop:bound_sumtree}
Let $S = |\Sigma|$ be the
size of the alphabet. For any integer $M$ we have
$$ \sum_{t\in\Ts:\atop\size{t} < M}  |f(t) - \hat{f}(t)| \leq  \frac{(4Sn^2)^{M+1}-1}{(4Sn^2)-1} \singv_{\hat{n}+1}\enspace.$$
\end{proposition}
\begin{proof}
For any integer $m$ there are less than $4^m$ binary trees with $m$ internal nodes (which is a bound
on the $m$-th Catalan number) and each one of these trees has $m+1$ leaves, thus $S^{m+1}$ possible
labelling of the leaves. Using the previous proposition we get
\begin{align*}
\sum_{t\in\Ts:\atop\size{t} < M}  |f(t) - \hat{f}(t)| 
 &=
 \sum_{m = 0}^{M-1}   \sum_{t\in\Ts:\atop\size{t} = m}  |f(t) - \hat{f}(t)| \\
&\leq
\sum_{m = 0}^{M-1} 4^m S^{m+1} \cdot  n^{2(m-1)}  \singv_{\hat{n}+1}\\
&\leq
\sum_{m = 1}^{M} (4Sn^2)^{m}  \singv_{\hat{n}+1} \\
&\leq
 \frac{(4Sn^2)^{M+1}-1}{(4Sn^2)-1}  \singv_{\hat{n}+1}.\qedhere
\end{align*}
\end{proof}

\begin{theorem*}
Let $A = \wtaSigma$ be a SVTA with $n$ states realizing a function $f$ and let 
$\singv_1 \geq \singv_2 \geq \cdots \geq \singv_n$ be the singular
values of the Hankel matrix $\mat{H}_f$. Let $\hat{f}$ be the function computed
by the SVTA truncation of $A$ to $\hat{n}$ states.

Let $S=|\Sigma|$ be the size of the alphabet, let $M$ be an integer and let $\varepsilon > 0$.

\begin{itemize}
\item  For any tree $t\in \Ts$ of size $M$,
 if ${\displaystyle M < \frac{\log\left(\frac{1}{\singv_{\hat{n}+1}}\right) + \log\left(\varepsilon\right) }{2\log n} }$
then $|f(t) - \hat{f}(t)| < \varepsilon$.
\item If ${\displaystyle M < \frac{\log\left(\frac{1}{\singv_{\hat{n}+1}}\right) + \log(\varepsilon)}{\log(4Sn^2)} } - 1$ then
$\displaystyle\sum_{t:\size{t}<M} |f(t)-\hat{f}(t)|< \varepsilon$.
\end{itemize}

\end{theorem*}
\begin{proof}
For the first bound, it is easy to check that if 
$$M < \frac{\log\left(\frac{1}{\singv_{\hat{n}+1}}\right) + \log\left(\varepsilon\right) }{2\log n}$$
then $ n^{2M} \singv_{\hat{n}+1} < \varepsilon$ and the result follows from 
Proposition~\ref{prop:bound_tree}. For the second one, if
$$ M < \frac{\log\left(\frac{1}{\singv_{\hat{n}+1}}\right) + \log(\varepsilon)}{\log(4Sn^2)} - 1$$ then
 $\frac{(4Sn^2)^{M+1}-1}{(4Sn^2)-1}\singv_{\hat{n}+1} < \varepsilon$ and the
 result follows from Proposition~\ref{prop:bound_sumtree}.
\end{proof}

%
%
%
%
%
%
%
%
%
%
%
%
%
%
%
%
%
%
%
%
%
%
%
%
%
%
%
%
%
%
%
%
%
%
%
%
%
%
%
%
%
%
%
%
%
%
%
%
%
%
%
%
%
%
%
%
%
%
%
%
%
%
%
%
%
%
%
%
%
%
%
%
%
%
%
%
%
%
%
%
%
%
%
%
%
%
%
%
%
%
%
%
%
%
%
%
%
%
%
%
%
%
%
%
%
%
%
%
%
%
%
%
%
%
%
%
%
%
%
%
%
%
%
%
%
%
%
%
%
%
%
%
%
%
%
%
%
%
%
%
%
%
%
%
%
%
%
%
%
%
%

%
%
%

\bibliographystyle{plain}
\bibliography{main}